\newtheorem{theorem}{Theorem}
\newtheorem{lemma}{Lemma}
\title{Multiply Robust Federated Estimation of Targeted Average Treatment Effects}
\author{%
  Larry Han\\
  Department of Health Sciences\\
  Northeastern University\\
  Boston, MA 02115 \\
  \texttt{lar.han@northeastern.edu} \\
  \And
  Zhu Shen\\
  Department of Biostatistics\\
  Harvard University\\
  Boston, MA 02115 \\
  \texttt{zhushen@g.harvard.edu} \\
  \AND
  Jose Zubizarreta\\
  Departments of Health Care Policy, Biostatistics, and Statistics\\
  Harvard University\\
  Boston, MA 02115 \\
  \texttt{zubizarreta@hcp.med.harvard.edu} \\
}
\begin{document}

\maketitle

\begin{abstract}
Federated or multi-site studies have distinct advantages over single-site studies, including increased generalizability, the ability to study underrepresented populations, and the opportunity to study rare exposures and outcomes. However, these studies are challenging due to the need to preserve the privacy of each individual's data and the heterogeneity in their covariate distributions. We propose a novel federated approach to derive valid causal inferences for a target population using multi-site data. We adjust for covariate shift and covariate mismatch between sites by developing multiply-robust and privacy-preserving nuisance function estimation. Our methodology incorporates transfer learning to estimate ensemble weights to combine information from source sites. We show that these learned weights are efficient and optimal under different scenarios. We showcase the finite sample advantages of our approach in terms of efficiency and robustness compared to existing approaches.
\end{abstract}

\section{Introduction}
Compared to single-site studies, federated or multi-site studies confer distinct advantages, such as the potential for increased generalizability of findings, the opportunity to learn about underrepresented populations, and the ability to study rare exposures and outcomes. However, deriving valid causal inferences using multi-site data is difficult due to numerous real-world challenges, including \textit{heterogeneity of site populations}, \textit{different data structures}, and \textit{privacy-preserving constraints} stemming from policies such as the General Data Protection Regulation (GDPR) and Health Insurance Portability and Accountability Act (HIPAA) that prohibit direct data pooling.

Recent methodological developments have focused on privacy-preserving estimation strategies. These strategies typically involve sharing summary-level information from multiple data sources \cite{vo2021federated, xiong2021federated, han2021federated, han2022privacy, kawamata2022federated}. However, they often require restrictive assumptions such as homogeneous data structures and model specifications (e.g., a common set of observed covariates measured using a common data model), which are not realistic in practice. 

To address these methodological gaps, we propose a \textit{multiply robust} and \textit{privacy-preserving} estimator that leverages multi-site information to estimate causal effects in a \textit{target population of interest}. Compared to existing approaches, our method allows investigators from different sites to incorporate \textit{site-specific covariate information and domain knowledge} and provides \textit{increased protection against model misspecification}. Our method allows for flexible identification under different settings, including systematically missing covariates and different site-specific covariates (termed \textit{covariate mismatch}). Our proposed method adopts an \textit{adaptive ensembling approach that optimally combines estimates from source sites} and serves as a data-driven metric for the transportability of source sites. Moreover, the proposed method relaxes the assumption of homogeneous model specifications by adopting a class of multiply robust estimators for estimating the nuisance functions. 

\subsection{Related Work and Contributions}
The current literature on multi-site causal inference typically assumes that a common set of confounders is observed in all sites \cite{ dahabreh2019generalizing, dahabreh2020extending, han2021federated, han2022privacy}. However, this assumption is rarely met  due to variations in local practices, e.g., differing data collection standards and coding practices. In particular, the target site often lacks data on certain covariates available in the source sites, and ignoring them can result in biased and inefficient inference \cite{zeng2023efficient}. Recently, \cite{yang2020combining} proposed a method to address covariate mismatch by integrating source samples with unmeasured confounders and a target sample containing information about these confounders. However, they assume that the target sample is either a simple random sample or is sampled according to a known procedure from the source population. \cite{guo2021multi} extended the method to the setting where the average treatment effect (ATE) is not identifiable in some sites by constructing control variates. However, their approach is limited to addressing selection biases in case-control studies and cannot be easily extended to other outcome types. \cite{zeng2023efficient} extended the framework by \cite{dahabreh2019generalizing, dahabreh2020extending} to handle covariate mismatch by regressing predicted conditional outcomes on effect modifiers and then taking the means of these regression models evaluated on target site samples. Our work extends \cite{zeng2023efficient} to the multi-site and federated data setting by utilizing an adaptive weighting approach that optimally combines estimates from source sites.

Most existing approaches in the generalizability and transportability literature deal with heterogeneous covariate distributions by modeling site selection processes \cite{cole2010generalizing, buchanan2018generalizing, tipton2013improving, dahabreh2019generalizing, dahabreh2020extending}. For instance,
covariate shift can be accounted for via inverse probability of selection weights \cite{cole2010generalizing, buchanan2018generalizing}, stratification \cite{tipton2013improving}, or augmentation \cite{dahabreh2019generalizing, dahabreh2020extending}, which require pooling individual-level information across sites. Our work differs from those in that we preserve individual data privacy, sharing only summary-level information about the target site. Specifically, we adopt density ratio models \cite{qin1998inferences, duan2020learningb} that only share covariate moments of the target samples. Under certain specifications, these density ratio models are equivalent to logistic regression-based selection models for adjusting heterogeneity between target and source populations. Our approach shares similarities with calibration weighting methods, but we employ semi-parametric efficiency theory to enable a closed-form approximation of variance rather than relying on the bootstrap. 

Further, when data sources are heterogeneous, it would be beneficial for investigators at different sites to incorporate site-specific knowledge when specifying candidate models. However, to the best of our knowledge, existing methods require common models to be specified across sites, which may not be realistic or flexible enough \cite{vo2021federated, xiong2021federated, han2021federated, han2022privacy, kawamata2022federated}. We relax this requirement by adopting a multiply robust estimator, allowing investigators in each site to propose multiple, different outcome and treatment models. The estimator is consistent if any one of the multiple outcome or treatment models is correctly specified. Our work builds on \cite{li2020demystifying}, which established an equivalence between doubly robust and multiply robust estimators using mixing weights determined by predictive risks of candidate models  \cite{han2013estimation, han2014further, han2014multiply, chan2013simple, chan2014oracle, chen2017multiply}. Further, to avoid negative transfer due to non-transportable source estimates, we adopt a data-adaptive ensembling approach \cite{han2021federated, han2022privacy} that guarantees that the federated estimator achieves improved precision compared to an estimator using target site data alone when at least one source estimate is sufficiently similar to the target estimate.

\section{Preliminaries}

We consider data from $K$ sites, where each site has access to its individual-level data but is prohibited from sharing this data with any other site. The set of sites will be denoted by $\mathcal{K} = \{1, 2, ..., K\}$. Without loss of generality, we define the target site to be the first site, i.e., $T = \{1\}$ and the source sites as the remaining sites, i.e., $\mathcal{S} = \mathcal{K} \setminus T = \{2, ..., K\}$.

For individual $i$, let $Y_{i}$ denote an observed outcome, which can be continuous or discrete.  $X_{i} \in \mathbb{R}^{p}$ represents the $p$-dimensional baseline covariates in source site $k \in \mathcal{S}$. $V_{i} \in \mathbb{R}^{q}$ represents the (partial) baseline covariates in the target site $T$ such that $V_{i} \subseteq X_i$. To simplify the presentation, we assume an identical set of covariates across all source sites, although our method can accommodate scenarios where distinct covariate sets are present among the source sites. Let $A_{i}$ represent a binary treatment indicator, with $A_{i} =1$ denoting treatment and $A_{i} = 0$ denoting control. $R_{i}$ is a site indicator with $R_{i} = k$ if patient $i$ is from the site $k$. We observe $n_T$ target samples, $D_T = \{Y_{i}, V_{i}, A_{i}, R_{i} = T, 1 \leq i \leq n_T\}$ and $n_k$ source samples, $D_k = \{Y_{i}, X_{i}, A_{i}, R_{i} = k, 1 \leq i \leq n_k\}$ for each $k \in \mathcal{S}$. The total sample size is $N = \sum_{k \in \mathcal{K}} n_k$. Under the potential outcomes framework \cite{neyman1923application, rubin1974estimating}, we denote the counterfactual outcomes under treatment and control as $\left\{Y_{i}(1), Y_{i}(0) \right\}$, and only one of them is observed: $Y_{i} = A_{i} Y_{i}(1) + (1-A_{i})Y_{i}(0)$ \cite{rubin1980randomization}. The data structure is illustrated in Figure \ref{figure_non_nested}.

\begin{figure}[!htbp]
  \centering
  \includegraphics[width=0.7\textwidth]{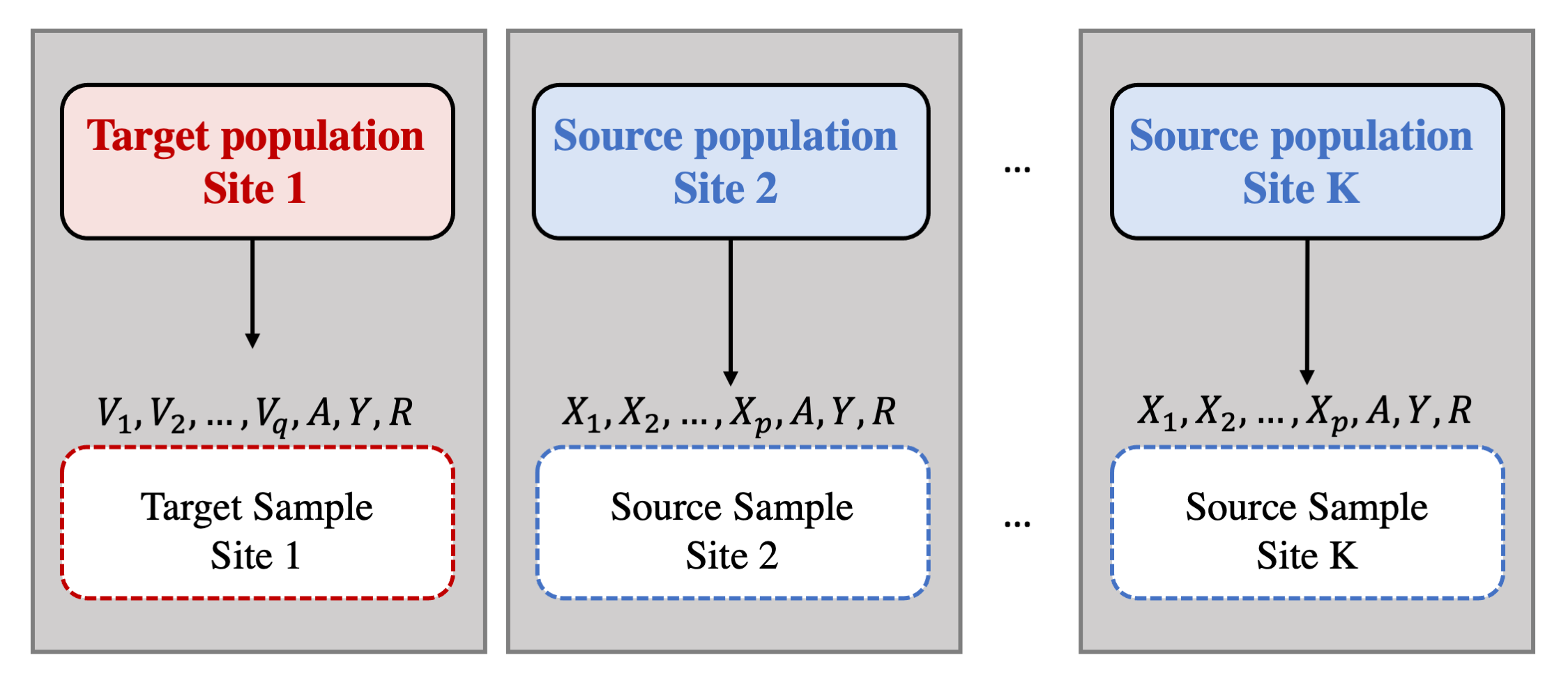}
  \caption{Schematic of the data structure in the multi-site setting.}
  \label{figure_non_nested}
\end{figure}

Our goal is to estimate the average treatment effect in the target population,
\begin{equation}
    \label{equation_TATE}
    \Delta_T = \mu_{1, T}-\mu_{0, T} \quad \text{ where } \quad \mu_{a, T} = {E}\left\{Y_i{(a)} \mid  R_i = T \right\} \text{ for } a \in \{0, 1\},
\end{equation}

where $\mu_{a, T}$ is the mean potential outcome under treatment $a$ in the target population. To identify this quantity, we consider the following assumptions: 

\begin{enumerate}[label=({A\arabic*})]
    \item \label{assumption_consistency} (Consistency): For every individual $i$, if $A_i = a$, then $Y_i = Y_i{(a)}$. 
    \item \label{assumption_target_unconfoundedness}
    (Mean exchangeability over treatment assignment in the target population): $ \\ {E}\{Y_i{(a)} \mid {V}_{i} = v, A_i, R_i = T \} = {E}\left\{ Y_i{(a)} \mid {V}_{i} = v, R_i = T \right\}$.
    \item \label{assumption_target_positivity}
    (Positivity of treatment assignment in the target population): $\\ 0 < P(A_i = 1 \mid {V}_{i} = v, R_i = T) < 1$ for any $v$ s.t. $P( {V}_{i} = v \mid R_i = T) > 0$.
    \item \label{assumption_source_unconfoundedness}
    (Mean exchangeability over treatment assignment in the source populations): $\\ {E}\{Y_i{(a)} \mid {X}_{i} = x, A_i, R_i = k \} = {E}\left\{ Y_i{(a)} \mid {X}_{i} = x, R_i = k \right\}$, $k \in \mathcal{S}$.  \item \label{assumption_source_positivity}(Positivity of treatment assignment in the source populations): $\\ 0 < P(A_i = 1 \mid {X}_{i} = x, R_i = k) < 1$ for any $x$ s.t. $P( {X}_{i} = x \mid R_i = k) > 0$, $k \in \mathcal{S}$.  
    \item \label{assumption_selection_unconfoundedness}(Mean exchangeability over site selection): $\\ {E}\{Y_i{(a)} \mid V_{i} = v, R_i = k\} = {E}\left\{ Y_i{(a)} \mid V_{i} = v\right\}$, $k \in \mathcal{K}$. 
    \item \label{assumption_selection_positivity}(Positivity of site selection): $\\ 0 < P(R_i = k \mid {V}_{i} = v) < 1$ for $k \in \mathcal{S}$ and any $v$ s.t. $P({V}_{i} = v) > 0$. 
\end{enumerate}

Assumption \ref{assumption_consistency} is the stable unit treatment value assumption (SUTVA), requiring no interference between individuals. Assumption \ref{assumption_target_unconfoundedness} (Assumption \ref{assumption_source_unconfoundedness}) states that the mean counterfactual outcome under treatment $a$ is independent of treatment assignment, conditional on baseline covariates in the target (source) populations. For Assumption \ref{assumption_target_unconfoundedness} and \ref{assumption_source_unconfoundedness} to hold, we require all effect modifiers to be measured in $V$. Assumption \ref{assumption_target_positivity} (Assumption \ref{assumption_source_positivity}) states that each individual in the target (source) populations has a positive probability of receiving each treatment. Assumption \ref{assumption_selection_unconfoundedness} states that the mean counterfactual outcome is independent of site selection, conditional on covariates in the target population. For Assumption \ref{assumption_selection_unconfoundedness} to hold, we require all covariates that are distributed differently between target and source populations (shifted covariates) to be measured in $V$. Thus, if these effect modifiers are measured in $V$, Assumption \ref{assumption_target_unconfoundedness}, \ref{assumption_source_unconfoundedness} and \ref{assumption_selection_unconfoundedness} automatically hold. Assumption \ref{assumption_selection_positivity} requires that in each stratum defined by $V$, the probability of being in a source population for each individual is positive. Theorem \ref{theorem_identification} shows that under Assumption \ref{assumption_consistency}, \ref{assumption_source_unconfoundedness} - \ref{assumption_selection_positivity}, the mean counterfactual outcome for the target can be identified in the sources. Since these assumptions may not hold in practice, we devise a data-adaptive ensembling procedure in Section \ref{federated} to screen out sites that significantly violate these assumptions.   
\begin{theorem} 
    \label{theorem_identification}
    If Assumptions \ref{assumption_consistency} -  \ref{assumption_target_positivity} hold, the mean counterfactual outcomes in the target population can be identified using the target sample.
    \begin{equation}
        \mu_{a, T} = E\left\{ Y_i{(a)} \mid R_i = T\right\} 
        = E\left\{ E\left\{ Y_i \mid V_{i} = v, A_i = a, R_i = T \right\} \mid R_i = T \right\}. 
        \label{equation_target_identification}
    \end{equation}
    
    If Assumptions \ref{assumption_consistency}, \ref{assumption_source_unconfoundedness} -  \ref{assumption_selection_positivity} hold, the mean counterfactual outcomes in the target population can be identified using the source samples.
    \begin{equation}
        \begin{aligned}[b]
        \mu_{a, T} & = E\left\{ Y_i{(a)} \mid R_i = T \right\} \\
        & = E\left\{ E\left\{ E\left\{ Y_i  \mid {X}_{i} = x, A_i = a, R_i = k\right\} \mid V_{i} = v, R_i = k \right\} \mid R_i = T \right\}. 
        \end{aligned}
        \label{equation_source_identification}
    \end{equation}
\end{theorem}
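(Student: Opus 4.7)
The plan is to prove the two identification formulas by successively applying the assumptions, one layer of conditioning at a time, starting from the definition $\mu_{a,T}=E\{Y_i(a)\mid R_i=T\}$ and ending at a functional of the observed-data distribution.

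For the target-only identification in \eqref{equation_target_identification}, I would proceed in three moves. First, apply the tower property to condition on $V_i$ inside the target population:
\begin{equation*}
  \mu_{a,T} = E\bigl\{\,E\{Y_i(a)\mid V_i,\,R_i=T\}\,\bigm|\,R_i=T\bigr\}.
\end{equation*}
Second, use the target-population mean exchangeability \ref{assumption_target_unconfoundedness} to insert $A_i=a$ into the inner conditioning set, which is legitimate pointwise in $v$ because \ref{assumption_target_positivity} guarantees $P(A_i=a\mid V_i=v,R_i=T)>0$ so the conditional expectation is well defined. Third, use consistency \ref{assumption_consistency} to replace $Y_i(a)$ by the observed $Y_i$ on the event $\{A_i=a\}$. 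The composition of these three substitutions yields \eqref{equation_target_identification}.

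For the source identification in \eqref{equation_source_identification}, I would layer the same type of argument but with an extra exchangeability step to move from the target distribution to the source distribution. Fix any $k\in\mathcal{S}$. First, condition on $V_i$ in the outer expectation:
\begin{equation*}
  \mu_{a,T} = E\bigl\{\,E\{Y_i(a)\mid V_i,\,R_i=T\}\,\bigm|\,R_i=T\bigr\}.
\end{equation*}
Next, invoke selection exchangeability \ref{assumption_selection_unconfoundedness} twice, first to drop the conditioning on $R_i=T$ in the inner expectation and then to re-insert $R_i=k$, yielding
\begin{equation*}
  E\{Y_i(a)\mid V_i=v,R_i=T\} = E\{Y_i(a)\mid V_i=v\} = E\{Y_i(a)\mid V_i=v,R_i=k\}.
\end{equation*}
Selection positivity \ref{assumption_selection_positivity} ensures the $R_i=k$ conditioning is well defined wherever $P(V_i=v)>0$. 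Apply the tower property one more time, conditioning on the richer source covariate $X_i\supseteq V_i$, to write $E\{Y_i(a)\mid V_i=v,R_i=k\}=E\bigl\{E\{Y_i(a)\mid X_i,R_i=k\}\bigm|V_i=v,R_i=k\bigr\}$. Finally, apply source mean exchangeability \ref{assumption_source_unconfoundedness} to insert $A_i=a$ (valid by source positivity \ref{assumption_source_positivity}) and then consistency \ref{assumption_consistency} to replace $Y_i(a)$ by $Y_i$. Assembling the three nested expectations gives \eqref{equation_source_identification}.

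There is no real obstacle here because the content of the theorem is an iterated-expectations identification result. The most delicate step, and the only one worth pausing on, is the double use of \ref{assumption_selection_unconfoundedness} to shuttle the inner expectation from the target site to an arbitrary source site $k$, since this is what licenses borrowing information from sources whose covariate distribution differs from the target's. After that transfer, the remaining steps are local to site $k$ and are structurally identical to the target-only argument.
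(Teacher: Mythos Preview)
Your proposal is correct and follows essentially the same route as the paper's proof: iterated expectations to introduce $V_i$, then \ref{assumption_selection_unconfoundedness} to swap $R_i=T$ for $R_i=k$, a further tower step on $X_i\supseteq V_i$, and finally \ref{assumption_source_unconfoundedness} plus \ref{assumption_consistency} to land on observed quantities. If anything you are slightly more careful than the paper in spelling out that the positivity assumptions \ref{assumption_target_positivity}, \ref{assumption_source_positivity}, \ref{assumption_selection_positivity} are what make each conditional expectation well defined.
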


\section{Site-specific Estimators}
For the target site $k = \{T\}$, a standard AIPW estimator is used for $\mu_{a, T}$ as follows
\begin{equation}
    \widehat{\mu}_{a, T} = \frac{1}{n_T} \sum_{i = 1}^{n} \biggl[ \frac{I(A_i=a, R_i=T)}{\widehat{\pi}_{a, T}(V_{i})} \Bigl\{ Y_i-\widehat{m}_{a, T}(V_{i}) \Bigr\} + \widehat{m}_{a, T}(V_{i})\biggr],
    \label{equation_target_specific}
\end{equation}
where $\widehat{m}_{a, T}(V_{i})$ is an estimator for $E\left\{ Y_i \mid V_{i}=v, A_i = a, R_i=T\right\}$, the outcome model in the target population, and $\widehat{\pi}_{a, T}(V_{i})$ is an estimator for $P(A_i = 1 \mid V_{i} = v, R_i = T)$, the probability of receiving treatment $a$ in the target population. 

For each source site $k \in \mathcal{S}$, we propose an estimator for $\mu_{a, T}$ as follows
\begin{equation}
    \begin{aligned}[b]
    \widehat{\mu}_{a, k} 
    = & \frac{1}{n_k} \sum_{i = 1}^{n} \biggl[ \frac{I(A_i=a, R_i=k)}{\widehat{\pi}_{a, k}(X_{i})} \widehat{\zeta}_{k}(V_{i}) \Bigl\{ Y_i-\widehat{m}_{a, k}(X_{i}) \Bigr\} \biggr] \\
    + & \frac{1}{n_k} \sum_{i = 1}^{n} \biggl[ I(R_i=k) \widehat{\zeta}_{k}(V_{i})
    \Bigl\{ \widehat{m}_{a, k}(X_{i})-\widehat{\tau}_{a, k}(V_{i}) \Bigr\} \biggr] + \frac{1}{n_T} \sum_{i = 1}^{n} I(R_i = T) \widehat{\tau}_{a, k}(V_{i}),
    \end{aligned}
    \label{equation_site_specific}
\end{equation}
where $\widehat{\tau}_{a, k}(V_{i})$ is an estimator for $E\left\{ m_{a, k}(x) \mid V_{i}=v, R_i = k\right\}$ and $\widehat{m}_{a, k}(X_{i})$ is an estimator for $E\left\{ Y_i \mid X_{i}=x, A_i = a, R_i=k\right\}$. $\widehat{\zeta}_k(V_{i})$ estimates $f(V_{i} \mid R_i = T) / f(V_{i} \mid R_i = k)$, the density ratios of covariate distributions in the target population $T$ and source population $k \in \mathcal{S}$. $\widehat{\pi}_{a, k}(X_{i})$ estimates $P(A_i = 1 \mid X_{i} = x, R_i = k)$, the probability of receiving treatment $a$ in source $k \in \mathcal{S}$. 

Compared to the transportation estimators in \cite{dahabreh2019generalizing, dahabreh2020extending}, we introduce two additional nuisance functions, $\zeta_k(V_{i})$ and ${\tau}_{a, k}(V_{i})$. Specifically, $\zeta_k(V_{i})$ accounts for covariate shift across sites, while ${\tau}_{a, k}(V_{i})$ is introduced to address covariate mismatch across sites. We provide estimation procedures for these nuisance functions in the following subsections, and the theoretical guarantees of the estimator are presented in Section \ref{sec_theory}.

\subsection{Density Ratio Weighting}

Most existing methods for adjusting for heterogeneity of site populations rely on inverse probability of selection weighting, which requires pooling target and source samples. However, such pooling is often restricted to protect individuals' data privacy. We propose a density ratio weighting approach, which offers equivalent estimation without the need for direct data pooling (see Appendix \ref{appendix_equivalence_density_ratio}). 

Formally, we model the density ratios of covariate distributions in the target $T$ and source $k \in \mathcal{S}$ by specifying an exponential tilt model \cite{qin1998inferences, duan2020learningb}; ${\zeta}_k ( V_{i}; \gamma_k) = {f(V_{i} \mid R_i = T)} / {f(V_{i} \mid R_i = k)} = \exp \left\{-{\gamma}_k^{\top} \psi({V_{i}})\right\}$ where $f(V_{i} \mid R_i = T)$ and $f(V_{i} \mid R_i = k)$ are density functions of covariates $V_{i}$ in the target $T$ and source $k \in \mathcal{S}$, respectively, and $\psi({V_{i}})$ is some $d$-dimensional basis with $1$ as its first element. With this formulation, ${\zeta}_k ( V_{i}; \gamma_k) = 1$ for $\gamma_k = 0$ and $\int {\zeta}_k ( V_{i}; \gamma_k) {f(V_{i} \mid R_i = k)} dx = 1$. If we choose $\psi({V_{i}}) = V_{i}$, we can recover the entire class of natural exponential family distributions. If we include higher-order terms, the exponential tilt model has greater flexibility in characterizing the heterogeneity between two populations \cite{duan20201fast}. We solve for $\widehat{\gamma}_k$ with the following estimating equation 
\begin{equation}
    \label{equation_density_ratio}
    \frac{1}{n_T} \sum_{i=1}^N I\left(R_i = T\right) \psi\left(V_{i}\right)= \frac{1}{n_k} \sum_{i=1}^N I\left(R_i=k\right) \psi\left(V_{i}\right) \exp \left\{-{\gamma}_k^{\top} \psi({V_{i}})\right\}.
\end{equation}
This procedure preserves individual privacy; choosing $\psi({V_{i}}) = V_{i}$, the target site only needs to share its covariate means with the source sites; each source site then solves \eqref{equation_density_ratio} with its own data to obtain the density ratios.

\subsection{Multiply Robust Estimator}

We relax the assumption of homogeneous model specifications across sites and allow each site to propose multiple models for nuisance functions. Our proposal follows the construction of multiply robust estimators for nuisance functions via a model-mixing approach \cite{li2020demystifying}. 

Formally, for each site $k \in \mathcal{K}$, we consider a set of $J$ candidate treatment models for the propensity scores $\{\pi^{j}_{a, k}\left(x\right): j \in \mathcal{J} = \{1, ..., J \} \}$. Let $\widehat{\pi}_{a, k}^j(x)$ be the estimator of $\pi^{j}_{a, k}(x)$ obtained by fitting the corresponding candidate models on the data, which can be parametric, semiparametric, or nonparametric machine learning models. $\widehat{\pi}_{a, k}(X_i) = \sum_{j =1}^{J} \widehat{\Lambda}_j \widehat{\pi}^j_{a, k}(X_i)$ denotes the weighted predictions of propensity scores, with weights $\widehat{\Lambda}_j$ assigned to predictions by each candidate model $j$. To calculate the weights $\widehat{\Lambda}_j$, we adapt a model-mixing algorithm developed in \cite{yang2000adaptive} and \cite{li2020demystifying} based on the cumulative predictive risks of candidate models. 

First, we randomly partition the data within each site into a training set $D^\text{train}_k$ of units indexed by $\{1, ..., n_k^\text{train}\}$ and a validation set $D^\text{val}_k$ of units indexed by $\{n_k^\text{train} + 1, ..., n_{k}\}$. Then, each candidate treatment model is fit on $D^\text{train}_k$ to obtain $\widehat{\pi}_{a, n_k^\text{train}}^j$ for $j \in \mathcal{J}$. The model-mixing weights are determined by the models' predictive risks assessed on $D^\text{val}_k$ according to the Bernoulli likelihood. Specifically, 
\begin{equation}
    \begin{aligned}[b]
    \widehat{\Lambda}_j &= \left(n_k - n_k^\text{train}\right)^{-1} \sum_{i=n_k^\text{train}+1}^ {n_k} \widehat{\Lambda}_{j, i} \quad \text{and}  \\ 
    \widehat{\Lambda}_{j, i} &= \frac{\Pi_{q=n_k^\text{train}+1}^{i-1} \widehat{\pi}_{a, n_k^\text{train}}^j\left(X_q\right)^{A_q}\left\{1-\widehat{\pi}_{a, n_k^\text{train}}^j\left(X_q\right)\right\}^{1-A_q}}{\sum_{j^{\prime} = 1}^J \Pi_{q=n_k^\text{train}+1}^{i-1} \widehat{\pi}_{a, n_k^\text{train}}^{j^{\prime}}\left(X_q\right)^{A_q}\left\{1-\widehat{\pi}_{a, n_k^\text{train}}^{j^{\prime}}\left(X_q\right)\right\}^{1-A_q}} \quad \text{for} \quad n_k^\text{train}+2 \leq i \leq n_k,
    \end{aligned}
    \label{equation_mix_propensity}
\end{equation}

where $\widehat{\Lambda}_{j, n_k^{\text{train}}+1}=1/J$.
The model mixing estimators are consistent if one of the $j \in \mathcal{J}$ candidate models is correctly specified \cite{li2020demystifying}. A similar strategy extends for conditional outcomes $m_{a,k}(X_i)$ by combining a set of $L$ candidate outcome models $\{m^{l}_{a,k}\left( x\right): l\in \mathcal{L} =\{1, ..., L\} \}$. We obtain $\widehat{m}_{a,k}(X_i) = \sum_{l =1}^{L} \widehat{\Omega}_l \widehat{m}_{a,k}^l(X_i)$ as the predicted outcomes with weights $\widehat{\Omega}_l$ of candidate outcomes models under treatment $a$ in site $k$. Further details are provided in Appendix \ref{appendix_multiply_robust_outcome}.

\subsection{Handling Covariate Mismatch}
To account for covariate mismatch, we adapt the approach in \cite{zeng2023efficient}, introducing the nuisance function $\tau_{a, k}(V_{i}) = E\{m_{a, k}(x) \mid V_{i} = v, R_i = k\}$, where $m_{a, k}(x)$ is the outcome regression for treatment $a$ in site $k$. First, we estimate $m_{a, k}(X_i)$ by regressing the outcome $Y_i$ on covariates $X_{i}$ among units receiving treatment $a$ in site $k$. We then regress $\widehat{m}_{a, k}(X_i)$, the estimates from the previous step, on $V_{i}$ in the source site $k$ to obtain $\widehat{\tau}_{a, k}(x)$. By doing so, we project all site-specific estimates of conditional outcomes to a common hyperplane defined by $V_{i}$. If all effect modifiers that are distributed differently between target and source populations are measured in $V_{i}$, then the information contained in the projected site-specific estimates can be transported to the target site. Finally, we take the mean of $\widehat{\tau}_{a, k}(x)$ over the target sample, which gives us the transported estimate $\hat{\tau}_{a,k}(V_i)$ for the mean counterfactual outcomes under treatment $a$ in the target population.

\section{Federated Global Estimator}
\label{federated}
Let $\widehat{\mu}_{a, T}$ denote the estimate of $\mu_{a,T}$ based on target data only and $\widehat{\mu}_{a,k}$ be the estimates of $\mu_{a,T}$ using source data $k \in \mathcal{S}$. We propose a general form of the federated global estimator as follows 
\begin{equation}
    \widehat{\mu}_{a, G} 
    = \widehat{\mu}_{a, T} + \sum_{k \in \mathcal{K} } \widehat{\eta}_{k} \left\{ \widehat{\mu}_{a, k} - \widehat{\mu}_{a, T} \right\},
    \label{equation_fed_estimator}
\end{equation}
where $\widehat{\eta}_k \geq 0$ is a non-negative weight assigned to site-specific estimates and $\sum_{k \in \mathcal{K}} \widehat{\eta}_k = 1$. The role of $\eta_k$ is to determine the ensemble weight given to the site-specific estimates. We can employ diverse weighting methods by selecting appropriate values of $\eta_k$. For example, if $\eta_k = 0$, the global estimator is simply the estimator based on target data only; if $\eta_k = n_k/N$, the global estimator combines site-specific estimates by their sample sizes; if $\eta_k = (1 / \sigma_k^2)/ \sum_{j \in \mathcal{K}} (1 / \sigma_j^2) $ where $\sigma_k^2 = \text{Var}(\widehat{\mu}_{a, k})$, the global estimator is the inverse variance weighting estimator, which is known to be appropriate when working models are homogeneous across sites
\cite{xiong2021federated}. To control for bias due to non-transportable site estimates while achieving optimal efficiency, we estimate $\eta_k$ data-adaptively by a penalized regression of site-specific influence functions \cite{han2021federated, han2022privacy}. This strategy ensembles the site-specific estimates for higher efficiency if they are sufficiently similar to the target estimates; if source estimates are significantly different, their weights will be shrunk toward zero with high probability. 

We denote the data-adaptive weights as $\eta_{k, L_1}$, obtained as the solutions to a penalized regression of the site-specific influence functions as follows
\begin{equation}
   \label{equation_l1_weights}
   \widehat{\eta}_{k, L_1} = \arg \min_{\eta_k \geq 0} \sum_{i=1}^N\left[\widehat{\xi}_{T, i}{(a)}-\sum_{k \in \mathcal{K} } \eta_k\left(\widehat{\xi}_{T, i}{(a)}-\widehat{\xi}_{k, i}{(a)}-\widehat{\delta}_k\right)\right]^2  + \lambda \sum_{k \in \mathcal{K} }\left|\eta_k\right| \widehat{\delta}_k^2,
\end{equation}
where $\widehat{\xi}_{T, i}{(a)}$ and $\widehat{\xi}_{k, i}{(a)}$ are the estimated influence functions for the target and source site estimators (see Appendix \ref{appendix_site_estimator_influence} for the exact form of the influence functions). The estimated difference $\widehat{\delta}_k = \widehat{\mu}_{a, k} - \widehat{\mu}_{a, T}$ quantifies the bias between the estimate from source $k \in \mathcal{S}$ and the estimate from the target $T$.  The tuning parameter $\lambda$ determines the penalty imposed on source site estimates and in practice, is chosen via cross-validation. Specifically, we create a grid of values of $\lambda$ and iteratively train and evaluate the model using different $\lambda$ values, selecting the one with the lowest average validation error after multiple sample splits.

We estimate the variance of $\widehat{\mu}_{a, G}$ using the estimated influence functions for $\widehat{\mu}_{a, T}$ and $\widehat{\mu}_{a, k}$. 
By the central limit theorem, $\sqrt{N} (\widehat{\mu}_{a, G} -\bar{\mu}_{a, G} ) \stackrel{d}{\rightarrow} \mathcal{N}(0, \Sigma)$, where $\Sigma=E\{\sum_{k\in \mathcal{K}} \bar{\eta}_k \xi_{k, i}(a)\}^2$ and $\bar{\mu}_{a, G}$ and $\bar{\eta}_k$ denote the limiting values of $\widehat{\mu}_{a, G}$ and $\hat{\eta}_k$ respectively. The standard error of $\widehat{\mu}_{a, G}$ is estimated as $ \sqrt{\widehat{\Sigma}/N}$ where $\widehat{\Sigma}=N^{-1} \sum_{k\in \mathcal{K}} \sum_{i=1}^{n_k}\{\widehat{\eta}_k \widehat{\xi}_{k, i}{(a)}\}^2$. A two-sided $(1-\alpha)\times$100\% confidence interval for $\mu_{a, G}$ is
\begin{equation}
    \widehat{\mathcal{C}}_\alpha=\left[\widehat{\mu}_{a, G}-\sqrt{\widehat{\Sigma}/N} \mathcal{Z}_{\alpha/2}, \quad \widehat{\mu}_{a, G}+\sqrt{\widehat{\Sigma} / N} \mathcal{Z}_{\alpha/2}\right],
    \label{equation_confidence_interval}
\end{equation}
where $\mathcal{Z}_{\alpha/2}$ is the $1 - \alpha / 2$ quantile for a standard normal distribution.

\section{Theoretical Guarantees}
\label{sec_theory}
\subsection{Site-specific Estimator}
We first establish the theoretical properties of the site-specific estimators constructed with the multiply robust model-mixing approach. Define $\overline{\pi}_{a, k}^j$, $\overline{m}_{a,k}^l$, $\overline{\tau}_{a, k}$ and $\overline{\zeta}_k$ as non-stochastic functionals that the corresponding estimators $\widehat{\pi}_{a, k}^j$, $\widehat{m}_{a,k}^l$, $\widehat{\tau}_{a, k}$ and $\widehat{\zeta}_k$ converge to. That is, 
\begin{align}
    \| \widehat{\pi}_{a, k}^j - \overline{\pi}_{a, k}^j \| & = o_p(1), \quad
    \| \widehat{m}_{a, k}^l - \overline{m}_{a, k}^l \|  = o_p(1), \nonumber \quad
    \| \widehat{\tau}_{a, k} - \overline{\tau}_{a, k} \|  = o_p(1), \quad
    \| \widehat{\zeta}_{k} - \overline{\zeta}_{k} \| & = o_p(1). \nonumber 
\end{align}

As shown in Lemmas \ref{lemma1_mix_propensity_risk} and \ref{lemma2_mix_outcome_risk} in Appendix \ref{appendix_li_lemmas}, the $L_2$ risks of the model mixing estimators $\widehat{\pi}_{a, k}$ and $\widehat{m}_{a, k}$ are bounded by the smallest risks of all candidate models plus a remainder term that vanishes at a faster rate than the risks themselves. 

\begin{theorem}
\label{theorem_site_consistency_normality}
Suppose that the conditions in Lemmas \ref{lemma1_mix_propensity_risk} and \ref{lemma2_mix_outcome_risk} hold, and that $\widehat{\pi}^j_{a, k}$, $\widehat{m}^l_{a, k}$, $\widehat{\zeta}_k$, $\widehat{\tau}_{a, k}$, $\bar{\pi}^j_{a, k}$, $\bar{m}^l_{a, k}$, $\bar{\zeta}_{k}$ and $\bar{\tau}_{a, k}$ are uniformly bounded for all treatment models $j \in\mathcal{J}$ and for all outcome models $l \in \mathcal{L}$. Consider the following conditions: 
\begin{enumerate}[start = 2, label=]
    \item
    \begin{enumerate}[label=(\Alph{enumi}\arabic*)]
    \item $\overline{\pi}_{a, k}^j = \pi_{a, k}$ for some $j \in \mathcal{J}$, \label{assumption_multiply_robust_B1}
    \item $\overline{m}_{a, k}^l = m_{a, k}$ for some $l \in \mathcal{L}$,\label{assumption_multiply_robust_B2}
    \end{enumerate}
    \item
    \begin{enumerate}[label=(\Alph{enumi}\arabic*)]
    \item $\overline{\zeta}_{k} = \zeta_{k}$, \label{assumption_multiply_robust_C1}
    \item $\overline{\tau}_{a, k} = \tau_{a, k}$. \label{assumption_multiply_robust_C2}
    \end{enumerate}
\end{enumerate}
Then, under Assumptions \ref{assumption_consistency} - \ref{assumption_selection_positivity}, and if one of \ref{assumption_multiply_robust_B1} or \ref{assumption_multiply_robust_B2} and one of \ref{assumption_multiply_robust_C1} or \ref{assumption_multiply_robust_C2} hold,  
\begin{align}
    \| \widehat{\mu}_{a, k}-\mu_{a, T} \| = O_p\left(n^{-1 / 2} + \|\widehat{\pi}_{a, k}-\pi_{a, k} \| \|\widehat{m}_{a, k}-{m}_{a, k} \|
    + \|\widehat{\zeta}_{k}-{\zeta}_{k} \| \|\widehat{\tau}_{a, k}-\tau_{a, k} \|
    \right).
\end{align}
Further, if the nuisance estimators satisfy the following convergence rate
\begin{align}
    \left\|\widehat{m}_{a, k}-m_{a, k} \right\|\left\|\widehat{\pi}_{a, k}-\pi_{a, k} \right\| &= o_p(1 / \sqrt{n}), \nonumber \quad
    \|\widehat{\zeta}_k -\zeta_k \|\left\|\widehat{\tau}_{a, k}-\tau_{a, k} \right\| =o_p(1 / \sqrt{n}), \nonumber
\end{align}
then $\sqrt{n} ( \widehat{\mu}_{a,k} - \mu_{a, T} )$
asymptotically converges to a normal distribution with mean zero and asymptotic variance equal to the semiparametric efficiency bound. The derivation of the result is provided in the Appendix.
\end{theorem}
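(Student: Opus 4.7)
The plan is to analyze $\widehat\mu_{a,k}$ via the standard influence-function decomposition. Letting $\phi(\nu; O_i)$ denote the summand defining $\widehat\mu_{a,k}$ in~\eqref{equation_site_specific} (where $\nu = (\pi, m, \zeta, \tau)$ and the two normalizations $1/n_k$ and $1/n_T$ are absorbed into a single per-observation summand via $I(R_i=k)/\widehat p_k$ and $I(R_i=T)/\widehat p_T$), $\bar\nu$ the probability limit of $\widehat\nu$, and $\nu_0$ the truth, I would write
\[
\widehat\mu_{a,k} - \mu_{a,T} = (\mathbb{P}_n - P)\phi(\nu_0) + (\mathbb{P}_n - P)\{\phi(\widehat\nu) - \phi(\nu_0)\} + \{P\phi(\widehat\nu) - \mu_{a,T}\}.
\]
The first term is $O_p(n^{-1/2})$ by the CLT. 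The middle empirical-process term is $o_p(n^{-1/2})$ under the uniform-boundedness hypothesis combined with consistency of each nuisance estimator, either by a Donsker argument or by the sample-splitting built into the model-mixing construction; consistency of the ensembles $\widehat\pi_{a,k}$ and $\widehat m_{a,k}$ follows from Lemmas~\ref{lemma1_mix_propensity_risk} and~\ref{lemma2_mix_outcome_risk} as soon as one candidate in each class is correctly specified.

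The crux is the plug-in bias $P\phi(\bar\nu) - \mu_{a,T}$. Iterating expectations first over $A_i$ given $(X_i, R_i=k)$ (which replaces $I(A_i=a)/\bar\pi_{a,k}$ by $\pi_{a,k}/\bar\pi_{a,k}$ acting on $m_{a,k}$), and then using both the conditional-mean identity $E[m_{a,k}(X_i)\mid V_i, R_i=k] = \tau_{a,k}(V_i)$ (so that $m_{a,k} - \tau_{a,k}$ integrates to zero against any function of $V$ in source $k$) and the density-ratio change of measure $E[\zeta_k(V) g(V)\mid R_i=k] = E[g(V)\mid R_i=T]$, together with the identification $\mu_{a,T} = E[\tau_{a,k}(V)\mid R_i=T]$ from Theorem~\ref{theorem_identification}, the bias collapses after cancellation to the clean cross-product form
\[
P\phi(\bar\nu) - \mu_{a,T} = E\!\left[(\bar\zeta_k - \zeta_k)(\tau_{a,k} - \bar\tau_{a,k})\,\big|\, R_i=k\right] + E\!\left[\bar\zeta_k \cdot \tfrac{\pi_{a,k} - \bar\pi_{a,k}}{\bar\pi_{a,k}}\,(m_{a,k} - \bar m_{a,k}) \,\Big|\, R_i=k\right].
\]
By Cauchy--Schwarz and the lower bound on $\bar\pi_{a,k}$ implied by positivity and uniform boundedness, this is controlled by $\|\widehat\pi_{a,k} - \pi_{a,k}\|\|\widehat m_{a,k} - m_{a,k}\| + \|\widehat\zeta_k - \zeta_k\|\|\widehat\tau_{a,k} - \tau_{a,k}\|$, yielding the first display of the theorem. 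Moreover, it manifestly vanishes whenever one of \ref{assumption_multiply_robust_B1} or \ref{assumption_multiply_robust_B2} and one of \ref{assumption_multiply_robust_C1} or \ref{assumption_multiply_robust_C2} holds, establishing multiply-robust consistency.

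For the asymptotic normality, the product-rate condition makes the plug-in bias $o_p(n^{-1/2})$, so $\sqrt{n}(\widehat\mu_{a,k} - \mu_{a,T}) = \sqrt{n}(\mathbb{P}_n - P)\phi(\nu_0) + o_p(1) \stackrel{d}{\to} \mathcal{N}(0, \mathrm{Var}\{\phi(\nu_0)\})$ by Slutsky and the CLT. To identify $\mathrm{Var}\{\phi(\nu_0)\}$ with the semiparametric efficiency bound I would verify that $\phi(\nu_0)$ coincides with the efficient influence function of $\mu_{a,T}$, by computing its pathwise derivative along one-dimensional regular submodels and checking orthogonality to the nuisance tangent space, paralleling the analogous single-source derivation in \cite{zeng2023efficient}. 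The main obstacle is the bias algebra: the three components of $\phi$ live at different conditioning levels ($X$ versus $V$) and reference different distributions (source $k$ versus target $T$), so the product-form reduction requires careful bookkeeping for all first-order cross terms to cancel; controlling the empirical-process remainder for the randomly-weighted ensembles is a secondary, more standard concern which Lemmas~\ref{lemma1_mix_propensity_risk} and~\ref{lemma2_mix_outcome_risk} reduce to the usual uniform-boundedness/Donsker argument.
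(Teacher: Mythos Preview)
Your proposal is correct and follows essentially the same route as the paper: the same three-term decomposition into a CLT term, an empirical-process remainder (handled in the paper via Kennedy's sample-splitting lemma rather than a Donsker argument), and a plug-in bias that is reduced by iterated conditioning to the same two cross-product terms $E[\bar\zeta_k\,\bar\pi_{a,k}^{-1}(\pi_{a,k}-\bar\pi_{a,k})(m_{a,k}-\bar m_{a,k})]$ and $E[(\zeta_k-\bar\zeta_k)(\bar\tau_{a,k}-\tau_{a,k})]$. The only notable differences are that the paper establishes consistency under the four $(B,C)$ combinations via an explicit case-by-case lemma rather than reading it off the product form as you do, and that the paper simply computes $\mathrm{Var}\{\phi(\nu_0)\}$ and asserts it equals the semiparametric efficiency bound without carrying out the pathwise-derivative verification you sketch.
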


\subsection{Federated Global Estimator}
\begin{theorem}
    \label{theorem_global_consistency_normality}
    Under Assumptions \ref{assumption_consistency} - \ref{assumption_selection_positivity} and the regularity conditions specified in the Appendix, the federated global estimator of $\Delta_T$, given by $\widehat{\Delta}_{G} = \widehat{\mu}_{1, G} - \widehat{\mu}_{0, G}$, is consistent and asymptotically normal,
    \begin{equation}
        \sqrt{N / \widehat{\mathcal{V}}}\left(\widehat{\Delta}_{G}-\Delta_{T}\right) \stackrel{d}{\rightarrow} \mathcal{N}(0,1),
    \end{equation}
with the variance estimated consistently as $\widehat{\mathcal{V}}$. The variance of $\widehat{\Delta}_{G}$ is no larger than that of the estimator based on target data only, $\widehat{\Delta}_{T} = \widehat{\mu}_{1, T} - \widehat{\mu}_{0, T}$. Further, if there exist some source sites with consistent estimators of $\Delta_{T}$ and satisfy conditions specified in the Appendix, the variance of $\widehat{\Delta}_{G}$ is strictly smaller than $\widehat{\Delta}_{T}$. 
\end{theorem}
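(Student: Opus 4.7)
The plan is to read off the large-sample behavior of $\widehat{\Delta}_G$ from the site-specific expansions supplied by Theorem \ref{theorem_site_consistency_normality}, combine them with an oracle-type analysis of the penalized weights $\widehat{\eta}_{k, L_1}$, and then deduce the variance comparisons directly from the limiting quadratic form. First, I would invoke Theorem \ref{theorem_site_consistency_normality} to write, for each site $k \in \mathcal{K}$,
\begin{equation*}
\widehat{\mu}_{a, k} - \mu_{a, T} = \delta_k + n_k^{-1} \sum_{i: R_i = k} \xi_{k, i}(a) + o_p(N^{-1/2}),
\end{equation*}
where $\delta_k = 0$ for the target and for every source whose nuisance estimators satisfy the multiply-robust conditions of Theorem \ref{theorem_site_consistency_normality}, and $\delta_k \neq 0$ otherwise. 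Let $\mathcal{K}^\ast \subseteq \mathcal{K}$ denote the (unknown) set of sites with $\delta_k = 0$; note that $T \in \mathcal{K}^\ast$. Because the samples $D_k$ are disjoint, the site-level influence functions $\{\xi_{k, i}(a)\}_i$ are mutually independent across sites, which is the structural fact driving the variance comparison later.

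Second, I would analyze \eqref{equation_l1_weights} as an adaptive lasso. Since $\widehat{\delta}_k^2 = O_p(N^{-1})$ for $k \in \mathcal{K}^\ast$ but is bounded away from zero for $k \notin \mathcal{K}^\ast$, the data-driven penalty weight $\widehat{\delta}_k^2$ punishes biased sources aggressively. Under the Appendix's rate conditions on $\lambda$ (essentially $\lambda/N \to 0$ together with $\lambda\, \widehat{\delta}_k^2 \to \infty$ for $k \notin \mathcal{K}^\ast$), the standard adaptive-lasso oracle property yields $P(\widehat{\eta}_{k, L_1} = 0) \to 1$ for biased sources, while on $\mathcal{K}^\ast$ the unpenalized first-order conditions of the quadratic loss drive $\widehat{\eta}_{k, L_1} \stackrel{p}{\rightarrow} \bar{\eta}_k$, where $(\bar{\eta}_k)_{k \in \mathcal{K}^\ast}$ is the non-negative minimizer of the population quadratic form $E\{(\xi_{T, i}(a) - \sum_{k \in \mathcal{K}^\ast} \eta_k(\xi_{T, i}(a) - \xi_{k, i}(a)))^2\}$.

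Third, substituting the expansion from step one into \eqref{equation_fed_estimator} and using $\widehat{\eta}_{k, L_1} = o_p(1)$ on biased sites cancels every $\delta_k$ contribution, giving
\begin{equation*}
\sqrt{N}(\widehat{\mu}_{a, G} - \mu_{a, T}) = \sum_{k \in \mathcal{K}^\ast} \bar{\eta}_k \cdot \sqrt{N}\, n_k^{-1} \sum_{i: R_i = k} \xi_{k, i}(a) + o_p(1).
\end{equation*}
A site-wise CLT and Slutsky's lemma then yield asymptotic normality with variance $\Sigma$ of Section \ref{federated}, and the plug-in $\widehat{\mathcal{V}}$ is consistent under the same nuisance-rate conditions. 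Differencing over $a$ delivers the statement for $\widehat{\Delta}_G$.

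The two variance comparisons then drop out of the oracle minimization. The assignment $\eta_T = 1$, $\eta_k = 0$ for $k \neq T$, is feasible and returns the variance of $\widehat{\Delta}_T$, so the minimizer is no worse, giving the first comparison. For the strict inequality, whenever $|\mathcal{K}^\ast| \geq 2$ the inter-site independence of the $\xi_{k, i}$ makes the directional derivative of the quadratic form at $\eta = 0$ along any extra consistent source equal to $-2\,\mathrm{Var}(\xi_{T, i}(a)) < 0$, so the KKT conditions force $\bar{\eta}_k > 0$ for at least one such $k$, yielding a strictly smaller limiting variance. The main obstacle I anticipate is rigorously establishing the oracle property for $\widehat{\eta}_{k, L_1}$: the ``regressors'' in \eqref{equation_l1_weights} are themselves estimated influence functions, so one must propagate the $o_p(N^{-1/2})$ nuisance-estimation errors through the KKT analysis while the penalty weight $\widehat{\delta}_k^2$ is itself a random data-adaptive quantity. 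This is where the Appendix's regularity conditions on $\lambda$ and on nuisance convergence rates must do the heavy lifting.
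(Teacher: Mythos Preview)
Your high-level strategy---establish an influence-function expansion for each site estimator, verify an adaptive-lasso oracle property for $\widehat{\eta}_{k,L_1}$ so that biased sources receive zero weight while unbiased ones receive the variance-minimizing weights, and then read off the variance comparisons from the limiting quadratic---matches the paper's approach. However, there is a genuine structural error in your site-level expansion that propagates through the rest of the argument.

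You write $\widehat{\mu}_{a,k} - \mu_{a,T} = \delta_k + n_k^{-1}\sum_{i: R_i = k}\xi_{k,i}(a) + o_p(N^{-1/2})$, summing only over observations in site $k$, and then assert that ``the site-level influence functions $\{\xi_{k,i}(a)\}_i$ are mutually independent across sites.'' But look again at \eqref{equation_site_specific}: the source-site estimator $\widehat{\mu}_{a,k}$ contains the term $n_T^{-1}\sum_{i} I(R_i = T)\widehat{\tau}_{a,k}(V_i)$, which is computed on \emph{target} data. Consequently the influence function of $\widehat{\mu}_{a,k}$ splits into a piece supported on $\{R_i = k\}$ and a piece supported on $\{R_i = T\}$ (the paper denotes these $\xi^{(1)}_{a,k}$ and $\xi^{(2)}_{a,k}$). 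Every source estimator therefore shares target-sample randomness with $\widehat{\mu}_{a,T}$ and with every other source estimator, and the asymptotic variance $\mathcal{V}(\eta)$ carries a nonzero covariance term $\mathrm{Cov}\{\xi^{(2)}_{1,T}-\xi^{(2)}_{0,T},\,\xi^{(2)}_{1,k}-\xi^{(2)}_{0,k}\mid R_i = T\}$.

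This breaks your directional-derivative argument for the strict inequality. Because the source and target influence functions are \emph{not} independent, the derivative of $\mathcal{V}(\eta)$ at the target-only allocation in the direction of a consistent source is not simply $-2\,\mathrm{Var}(\xi_{T,i})$; it involves the above covariance and need not be negative in general. This is precisely why the paper's strict-improvement claim is conditional: it imposes an explicit ``informative source'' condition of the form $\bigl|\mathrm{Cov}\{\sqrt{N}\widehat{\Delta}_T,\,\sqrt{N}(\widehat{\Delta}_G(\eta_s)-\widehat{\Delta}_T)\}\bigr|\geq \varepsilon$ for some consistent source $s$, rather than getting strict improvement for free from cross-site independence. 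Your ``no larger than'' comparison via feasibility of the target-only allocation survives, but the strict part does not follow without this extra hypothesis. To repair the argument, carry the two-piece decomposition $\xi^{(1)}+\xi^{(2)}$ through the CLT and the variance minimization, and replace the independence claim with the paper's covariance condition.
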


\section{Experiments}
\label{section_experiments}
We evaluate the finite sample properties of five different estimators: (i) an augmented inverse probability weighted (AIPW) estimator using data from the target site only (Target), (ii) an AIPW estimator that weights each site proportionally to its sample size (SS), (iii) an AIPW estimator that weights each site inverse-proportionally to its variance (IVW), (iv) an AIPW estimator that weights each site with the $L_1$ weights defined in \eqref{equation_l1_weights} (AIPW-$L_1$), and (v) a multiply robust estimator with the $L_1$ weights defined in \eqref{equation_l1_weights} (MR-$L_1$).

Across different settings, we examine the performance of each estimator in terms of bias, root mean square error, and coverage and length of 95\% confidence intervals (CI) across $500$ simulations.

We consider a total of five sites and fix the first site as the target site with a relatively small sample size of $300$. The source sites have larger sample sizes of $\left\{500, 500, 1000, 1000 \right\}$. We model heterogeneity in the covariate distributions across sites with skewed normal distributions and varying levels of skewness in each site, $X_{kp} \sim \mathcal{S N}\left(x ; \Xi_{k p}, \Omega_{k p}^2, \mathrm{~A}_{k p}\right)$, where $k \in \{1, ..., 5\}$ indexes each site and $p \in \{1, ..., 4\}$ indexes the covariates; $\Xi_{k p}$, $\Omega_{k p}^2$ and $\mathrm{~A}_{k p}$ are the location, scale, and skewness parameters, respectively. Following \cite{kangschafer2007data-generation}, we also generate  covariates $Z_{kp}$ as non-linear transformation of $X_{kp}$ such that $Z_{k1} = \exp (X_{k1} / 2)$, $Z_{k2} = X_{k2} / \{1+\exp (X_{k1}) \}+10$, $Z_{k3}=(X_{k1} X_{k3} / 25+0.6)^3$ and $Z_{k4}=(X_{k2}+X_{k4}+20)^2$.

For the MR-$L_1$, we adaptively mix two outcome models and two treatment models. We specify the first model with the covariates $X_{kp}$, and the second model with the covariates $Z_{kp}$.  The AIPW-$L_1$ estimator requires a common model to be specified across sites, so we specify the outcome and treatment models using covariates $X_{kp}$. 

The tuning parameter $\lambda$ is selected through cross-validation using a grid of values $\{0,10^{-3},10^{-2},0.1,0.5,1,2,5,10\}$. To perform cross-validation, the simulated datasets in each site are split into two equally sized training and validation datasets.

\subsection{No Covariate Mismatch}

We first consider the setting where there is no covariate mismatch, i.e. $p = 4$ for both target and source sites. For each unit, we generate potential outcomes as
\begin{equation}
    Y_k(a) = 210 + X_k \beta_x + Z_k \beta_z + \varepsilon_k
    \label{equation_DGP_outcome}
\end{equation}
where $\beta_x = \beta_z = (27.4, 13.7, 13.7, 13.7)$. For units in the target site, we generate outcomes with $X_k$ only by setting $\beta_z = 0$; for units in the source sites, either $X_{k}$ or $Z_{k}$ is used to generate outcomes. If $\beta_x \neq 0$, then $\beta_z = 0$ and vice versa. Similarly, the treatment is generated as 
\begin{equation}
    A_k \sim \operatorname{Bernoulli}\left(\pi_k \right) \quad \pi_k=\operatorname{expit}(X_k \alpha_x + Z_k \alpha_z )
    \label{equation_DGP_treatment}
\end{equation}
where 
$\alpha_x = \alpha_z = (-1, 0.5, -0.25, -0.1)$. For units in the target site, we generate treatments with $X_k$ only by setting $\alpha_z = 0$; for units in the source sites, either $X_{k}$ or $Z_{k}$ is used to generate treatments. If $\alpha_x \neq 0$, then $\alpha_z = 0$ and vice versa. With this data generation scheme, the true ATE is $\Delta_{T} = 0$. 

We compare the performance of the five estimators described above under the following settings: 

\textbf{Setting 1 ($C = 0$)}: outcomes and treatments in all source sites are generated with $Z_k$. However, all source sites misspecify both models with $X_k$. The target site correctly specifies both models. 

\textbf{Setting 2 ($C = 1/2$)}: outcomes and treatments are generated with $X_k$ in Sites 2 and 4, but with $Z_k$ in Sites 3 and 5; thus, the outcome and treatment models are misspecified in Sites 3 and 5, and only half of the source sites correctly specify the models. 

\textbf{Setting 3 ($C = 1$)}: outcomes and treatments in all source sites are generated with $X_k$, so all source sites correctly specify outcome and treatment models with $X_k$. 

\begin{table}[h]
\caption{Mean absolute error (MAE), root mean squared error (RMSE), coverage (Cov.), and length (Len.) of $95\%$ CIs based on 500 simulated data sets in three (mis)specification settings.}
\label{table_varying_prop_correct_sites}
\centering
\footnotesize
\begin{tabular}{lrrrrr} 
  \toprule
  & Target & SS & IVW & AIPW-$L_1$ & MR-$L_1$ \\
  \cmidrule{2-6}  
  $C = 0$ &  &  & \\ 
  \hspace{.2cm} MAE &  0.109 & 1.933 & 0.177 & 0.110  & 0.050 \\
  \hspace{.2cm} RMSE & 0.141 & 1.987 & 0.219 & 0.144 & 0.061 \\ 
  \hspace{.2cm} Cov. & 0.950 & 0.998 & 0.826 & 0.936 & 0.960 \\ 
  \hspace{.2cm} Len. & 0.551 & 7.035 & 0.567 & 0.547 & 0.234 \\
  \midrule
  $C = 1/2$ &  &  &\\ 
  \hspace{.2cm} MAE &  0.109 & 1.111 & 0.107 & 0.109 &  0.050 \\
  \hspace{.2cm} RMSE & 0.141 & 1.189 & 0.139 & 0.140  & 0.062 \\ 
  \hspace{.2cm} Cov. & 0.950 & 1.000 & 0.942 & 0.950  & 0.962 \\ 
  \hspace{.2cm} Len. & 0.551 & 6.010 & 0.540 & 0.547 &  0.242 \\
  \midrule
  $C = 1$ &  &  &\\ 
  \hspace{.2cm} MAE &  0.109 & 0.036 & 0.035 & 0.050 &  0.049 \\
  \hspace{.2cm} RMSE & 0.141 & 0.045 & 0.044 & 0.064 &  0.063 \\ 
  \hspace{.2cm} Cov. & 0.950 & 0.968 & 0.956 & 0.958 & 0.960 \\ 
  \hspace{.2cm} Len. & 0.551 & 0.195 & 0.191 & 0.260 & 0.253 \\ 
  \bottomrule
\end{tabular}
\end{table}

The results in Table \ref{table_varying_prop_correct_sites} indicate that the MR-$L_1$ estimator has lower RMSE than the Target estimator when some source sites have correctly specified models ($C = 1/2$ and $C = 1$). Relative to the MR-$L_1$ estimator, the SS and IVW estimators demonstrate larger biases and RMSE, and lower coverage when some source sites have misspecified models ($C=0$ and $C = 1/2)$. The MR-$L_1$ estimator shows reduced biases and RMSE compared to the AIPW-$L_1$ estimator, while maintaining similar coverage; this improvement can be attributed to the inclusion of an additional model that closely resembles the true model. When all source sites correctly specify working models ($C=1$), the IVW estimator performs optimally with the shortest confidence interval as expected. 

\subsection{Covariate Mismatch}

To demonstrate that our proposed MR-$L_1$ estimator can handle covariate mismatch across sites, 
we modify the data-generating process in the following way: only two covariates are used in the outcome and treatment generation processes in the target site. Specifically, the generating models remain the same as in \eqref{equation_DGP_outcome} and \eqref{equation_DGP_treatment}, using covariates $X_{k}$. However, for units in the target site, we set $\beta_x = (27.4, 13.7, 0, 0)$ for outcome generation and $\alpha_x = (-1, 0.5, 0, 0)$ for treatment generation.

The AIPW-$L_1$ estimator, which requires common models across sites, only uses the shared covariates ($X_{k1}$ and $X_{k2}$) to specify outcome and treatment models for all sites. On the other hand, our MR-$L_1$ estimator allows for different covariates in different sites, so we utilize both shared covariates with the target site and unique covariates to specify the outcome and treatment models in the source sites.

\begin{table}[h]
\caption{Mean absolute error (MAE), root mean squared error (RMSE), coverage (Cov.), and length (Len.) of $95\%$ CIs based on 500 simulated data sets in covariate mismatch settings.}
\label{table_covariate_mismatch}
\centering
\footnotesize
\begin{tabular}{lrrrrr} 
  \toprule
  & Target & SS & IVW & AIPW-$L_1$ & MR-$L_1$ \\
  \cmidrule{2-6} 
  \hspace{.1cm} MAE &  0.108 & 4.331 & 0.150 & 0.107 &  0.053 \\
  \hspace{.1cm} RMSE & 0.136 & 4.401 & 0.186 & 0.134 &  0.067 \\ 
  \hspace{.1cm} Cov. & 0.946 & 1.000 & 0.882 & 0.950 &  0.944 \\ 
  \hspace{.1cm} Len. & 0.538 & 26.024 & 0.553 & 0.536 &  0.253 \\
  \bottomrule
\end{tabular}
\end{table}

In Table \ref{table_covariate_mismatch}, we observe that the AIPW-$L_1$ estimator exhibits similar bias, RMSE, coverage, and length of confidence intervals as the Target estimator while outperforming the SS and IVW estimators. This is because relying solely on shared covariates leads to significant biases in all source sites (Figure \ref{figure_site-specific}, left panel), and the AIPW-$L_1$ estimator assigns nearly all of the ensemble weight to the target site so as to reduce bias.

In contrast, the MR-$L_1$ estimator outperforms the AIPW-$L_1$ estimator by exhibiting substantially smaller bias, lower RMSE, and better coverage. This improvement can be attributed to the inclusion of unique covariates from the source sites, which allows for the recovery of true models in those sites and contributes to a more accurate estimation of $\Delta_{T}$ (Figure \ref{figure_site-specific}, right panel). These findings suggest that neglecting covariate mismatch by solely relying on shared covariates can lead to highly biased results.

\begin{figure}[!htbp]
\centering
\includegraphics[width=0.75\textwidth]{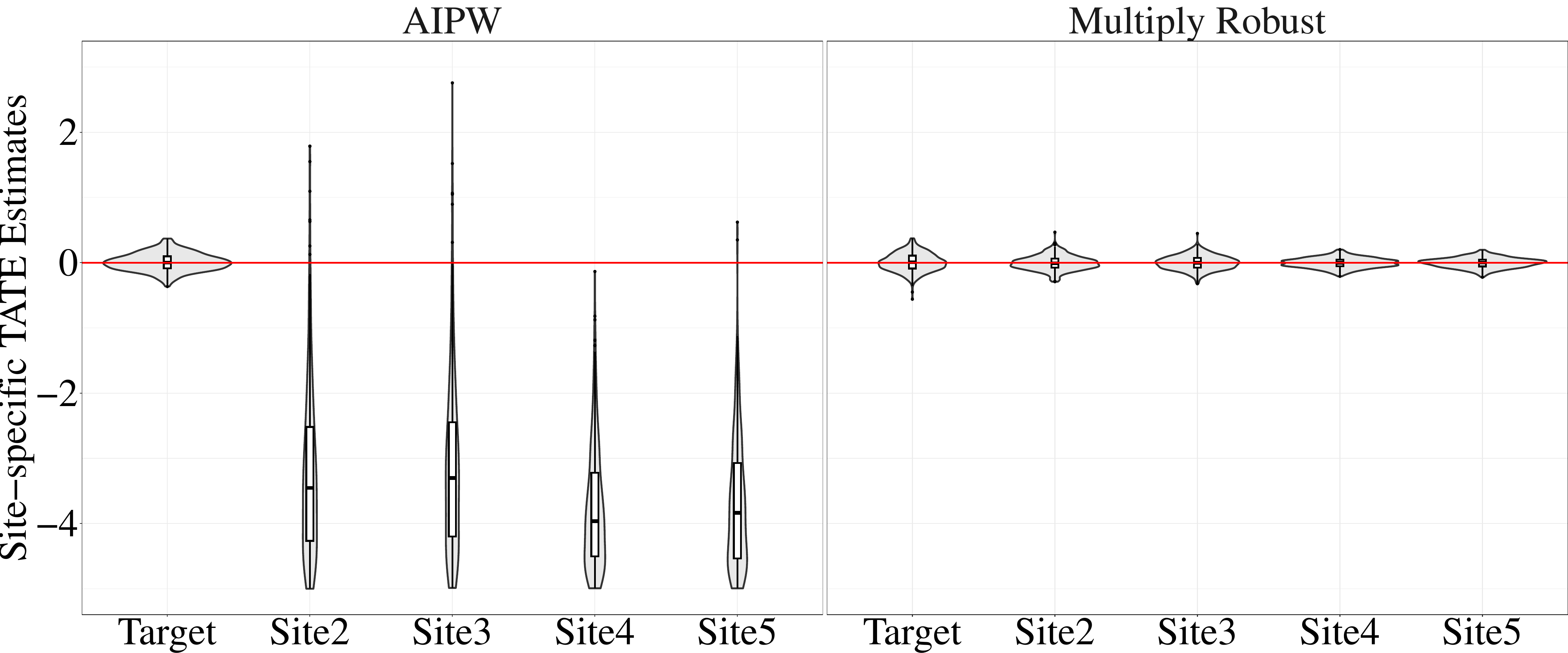}
\caption{Estimates of the TATE based on $500$ simulated data sets with covariate mismatch comparing the site-specific estimators with nuisance functions estimated by AIPW (left) and by multiply robust model-mixing (right).} \label{figure_site-specific}
\end{figure}

\section{Conclusion}
\label{section_discussion}
We have proposed a novel federated approach for \textit{privacy-preserving}, \textit{multiply robust}, and \textit{flexible} estimation of causal effects. Compared to existing federated methods, our proposed approach accommodates covariate shift and covariate mismatch across sites, while guaranteeing efficient estimation and preserving privacy in the sense that only covariate means of the target samples are shared in a single round of communication. Our proposal allows investigators in each site to have greater flexibility in specifying candidate models by utilizing site-specific information. Moreover, our method utilizes adaptive ensemble weights to avoid negative transfer in the federation process. The limitations of the current proposal provide opportunities for further research. To handle high-dimensional covariates, future research can explore ways to jointly model the propensity score and density ratio to reduce the dimension of parameters for population balancing. 
   
\bibliographystyle{abbrv}

\bibliography{neurips_2023}

\newpage
\pagenumbering{arabic}
\appendix

\noindent{\Large \bfseries APPENDIX \par}
\section{Equivalence of density ratio weighting and inverse of selection probability weighting}
\label{appendix_equivalence_density_ratio}

We show that the inverse probability of selection weighting (IPSW) to the site $k$, $\rho_k(V_i) = \frac{1 - P(R_i = k \mid V_i = v)}{P(R_i = k \mid V_i = v)}$ is equivalent to the density ratio weighting $ \zeta_k(V_i) = \frac{P(V_i = v \mid R_i = T)}{P(V_i = v \mid R_i = k)}$. 

Under Assumptions \ref{assumption_consistency} - \ref{assumption_selection_positivity}, the site-specific estimators based on the IPSW for $\mu_{a, T}$ is
\begin{equation}
    \begin{aligned}[b]
    \widehat{\mu}_{a, k} & = \frac{1}{n_T} \sum_{i = 1}^{n} \biggl[ \frac{I(A_i=a, R_i=k)}{\widehat{\pi}_{a, k}(X_{i})} \widehat{\rho}_k(V_i) \{ Y_i-\widehat{m}_{a, k}(X_{i}) \} \biggr] \\
    & + \frac{1}{n_T} \sum_{i = 1}^{n} \biggl[ I(R_i=k) \widehat{\rho}_k(V_i)
    \Bigl\{ \widehat{m}_{a, k}(X_{i})-\widehat{\tau}_{a, k}(V_i) \Bigr\} \biggr] \\
    & + \frac{1}{n_T} \sum_{i = 1}^{n} I(R_i=T) \widehat{\tau}_{a, k}(V_{i})
    \end{aligned}
    \label{equation_zeng_transport}
\end{equation}
where $\widehat{\rho}_k(V_i) = \frac{1 - \widehat{P}(R_i = k \mid V_i = v)}{\widehat{P}(R_i = k \mid V_i = v)}$ is an estimator for $\rho_k(V_i)$. Applying Baye's rule, we show that the IPSW is equivalent to the density ratio weighting up to a constant, 
\begin{equation}
    \begin{aligned}[b]
    \rho_k(V_i) &= \frac{1 - P(R_i = k \mid V_i = v)}{P(R_i = k \mid V_i = v)} \\ 
    &= \frac{P(R_i = T \mid V_i = v)}{P(R_i = k \mid V_i = v)} \\
    &= \frac{P(V_i = v \mid R_i = T) P(R_i = T)}{P(V_i = v \mid R_i = k) P(R_i = k)} \\
    &= {\zeta}_k (V_i) \frac{P(R_i = T)}{P(R_i = k)}. 
    \end{aligned}
    \label{equation_equivalence_density_ratio}
\end{equation}
We re-write \eqref{equation_zeng_transport} by 
substituting $\widehat{\rho}_k(V_i)$ with $\widehat{\zeta}_k(V_i)$, 
\begin{equation}
    \begin{aligned}[b]
\widehat{\mu}_{a, k} 
& = \frac{1}{n_T} \sum_{i = 1}^{n} \biggl[ \frac{I(A_i=a, R_i=k)}{\widehat{\pi}_{a, k}(X_{i})} {\widehat{\zeta}}_k (V_i) \frac{\widehat{P}(R_i = T)}{\widehat{P}(R_i = k)} \{Y_i-\widehat{m}_{a, k}(X_{i}) \} \biggr] \\
& + \frac{1}{n_T} \sum_{i = 1}^{n} \biggl[ I(R_i=k) {\widehat{\zeta}}_k (V_i) \frac{\widehat{P}(R_i = T)}{\widehat{P}(R_i = k)} \{ \widehat{m}_{a, k}(X_{i})-\widehat{\tau}_{a, k}(V_i) \} \biggr] \\
& + \frac{1}{n_T} \sum_{i = 1}^{n} I(R_i=T) \widehat{\tau}_{a, k}(V_{i}). 
\end{aligned}
\end{equation}
A reasonable estimator $\frac{\widehat{P}(R_i = T)}{\widehat{P}(R_i = k)}$ is $\frac{n_T}{n_k}$. Therefore, we recover our proposed site-specific estimator for $\mu_{a, T}$,  
\begin{equation}
\begin{aligned}[b]
\widehat{\mu}_{a, k} 
& = \frac{1}{n_k} \sum_{i = 1}^{n} \biggl[ \frac{I(A_i=a, R_i=k)}{\widehat{\pi}_{a, k}(X_{i})} {\widehat{\zeta}}_k (V_i) \{Y_i-\widehat{m}_{a, k}(X_{i})\} \biggr] \\
& + \frac{1}{n_k} \sum_{i = 1}^{n} \biggl[ I(R_i=k) {\widehat{\zeta}}_k (V_i) \{ \widehat{m}_{a, k}(X_{i})-\widehat{\tau}_{a, k}(V_i) \} \biggr] \\ 
& + \frac{1}{n_T} \sum_{i = 1}^{n} I(R_i=T) \widehat{\tau}_{a, k}(V_{i}). 
\end{aligned}
\end{equation}

\section{Multiply robust estimation for ${m}_{a, k}$}
\label{appendix_multiply_robust_outcome}
For multiply robust outcome estimation within each site $k$, we consider a set of $L$ candidate models for conditional outcomes $\{m^{l}_{a,k}\left( x\right): l \in \mathcal{L} = \{1, ..., L \} \}$. Let $\hat{m}_{a,k}^l(x)$ be the estimates of $m^{l}_{a,k}(x)$ obtained by fitting the corresponding candidate models, which can be parametric, semiparametric, or nonparametric machine learning models. Let $\widehat{m}_{a,k}(X_i) = \sum_{l =1}^{L} \widehat{\Omega}_l \widehat{m}_{a,k}^l(X_i)$ be the predictions with ensemble weights $\widehat{\Omega}_l$ of candidate outcome models under treatment $a$ in site $k$. We derive the ensemble weights $\widehat{\Omega}_l$ based on the cumulative predictive risk of candidate models. In particular, we denote the data corresponding to treated and control units as $D_{k, 1}$ and $D_{k, 0}$ respectively, and the sample sizes of $D_{k, 1}$ and $D_{k, 0}$ are denoted as $n_{k, 1}$ and $n_{k, 0}$. Consider the treated samples first; we randomly partition $D_{k, 1}$ into a training set $D^\text{train}_{k, 1}$ of units $i \in \{1, ..., n_{k, 1}^\text{train}\}$ and a validation set $D_{k, 1}^\text{val}$ of units $i \in \{n_{k, 1}^\text{train} + 1, ..., n_{k, 1}\}$. Then, each candidate outcome model is fit on $D^\text{train}_{k, 1}$ to obtain $\widehat{m}_{a,n_{k,1}^{\text{train}}}^l$ for $l \in \mathcal{L}$. 

If the outcome is binary, the ensemble weights are determined by the fitted models' predictive risks evaluated on $D^\text{val}_{k, 1}$ according to the Bernoulli likelihood as shown in \eqref{equation_mix_propensity}. If the outcome is continuous, the ensemble weights are alternatively determined by the mean squared errors of the fitted models on $D^\text{val}_{k, 1}$ \cite{yang2004ensemble, Gu2019aggregate}. Specifically,  
\begin{equation}
    \begin{aligned}[b]
    \widehat{\Omega}_l &= \left(n_{k, 1}-{n_{k, 1}^\text{train}}\right)^{-1} \sum_{i=n_{k,1}^\text{train}+1}^{n_{k, 1}} \widehat{\Omega}_{l, i} \text{ and } \\
    \widehat{\Omega}_{l, i} &= \frac{\exp \left[-\kappa \sum_{q=n_{k,1}^{\text{train}}+1}^{i-1}\left\{Y_q-\widehat{m}_{n_{k,1}^{\text{train}}}^l\left(X_q\right)\right\}^2\right]}{\sum_{l^{\prime}=1}^{L} \exp \left[-\kappa \sum_{q=n_{k,1}^\text{train}+1}^{i-1}\left\{Y_q-\widehat{m}_{n_{k,1}^{\text{train}}}^{l^{\prime}}\left(X_q\right)\right\}^2\right]} \quad \text{ for } n_{k,1}^\text{train}+2 \leq i \leq n_{k, 1}, 
    \end{aligned}
    \label{equation_mix_outcome}
\end{equation}
where $\widehat{\Omega}_{l, n_{k, 1}^\text{train}+1} = 1 / L$ and the ensemble predictions for the conditional outcomes under treatment is $\widehat{m}_{1,k}(X_i) = \sum_{l =1}^{L} \widehat{\Omega}_l \widehat{m}_{1,k}^l(X_i)$. The above procedure is then repeated in the control group $D_{k, 0}$ and $a = 0$ to obtain the ensemble predictions for the conditional outcomes under control.

The tuning parameter $\kappa$ in \eqref{equation_mix_outcome} can be selected via cross-validation and \cite{Gu2019aggregate} showed that the performance of model mixture estimators is generally robust across different choices of $\kappa$; they recommended choosing $\kappa = \max (1,\lfloor\log (L)\rfloor)$. 

\section{Additional experimental details}

We consider five different estimators: (i) an augmented inverse probability weighted (AIPW) estimator using data from the target site only (Target), (ii) an AIPW estimator that weights each site proportionally to its sample size (SS), (iii) an AIPW estimator that weights each site inverse-proportionally to its variance (IVW), (iv) an AIPW estimator that weights each site with the $L_1$ weights defined in \eqref{equation_l1_weights} (AIPW-$L_1$), and (v) a multiply robust estimator with the $L_1$ weights defined in \eqref{equation_l1_weights} (MR-$L_1$). 

\subsection{Distribution of covariates}
We consider a total of five sites and fix the first site as the target site with a relatively small sample size of $300$. The source sites have larger sample sizes of $\left\{500, 500, 1000, 1000 \right\}$. We model heterogeneity in the covariate distributions across sites with skewed normal distributions and varying levels of skewness in each site, $X_{kp} \sim \mathcal{S N}\left(x ; \Xi_{k p}, \Omega_{k p}^2, \mathrm{~A}_{k p}\right)$, where $k \in \{1, ..., 5\}$ indexes each site and $p \in \{1, ..., 4\}$ indexes the covariates. Specifically, for any given site $k \in \mathcal{K}$, we set the location parameters $\Xi_{k p} = 0$ and the scale parameter $\Omega_{k p} = 1$. Moreover, in the case of the target site, we specifically assign the skewness parameter $\mathrm{~A}_{k p}$ a value of zero, denoting a symmetrical distribution. However, for source site $k \in \mathcal{S}$, we adopt different values for $\mathrm{~A}_{k p}$ based on the sample size. If the sample size is equal to $500$, we assign $\mathrm{~A}_{k p}$ the value of $(1/2)^p$, reflecting a positively skewed distribution. On the other hand, if the sample size is $1000$, we assign $\mathrm{~A}_{k p}$ the value of $-(1/2)^p$, indicating a negatively skewed distribution. Following \cite{kangschafer2007data-generation}, we generate covariates $Z_{kp}$ as non-linear transformation of $X_{kp}$ such that $Z_{k1} = \exp (X_{k1} / 2)$, $Z_{k2} = X_{k2} / \{1+\exp (X_{k1}) \}+10$, $Z_{k3}=(X_{k1} X_{k3} / 25+0.6)^3$ and $Z_{k4}=(X_{k2}+X_{k4}+20)^2$.

\subsection{Site-specific estimates}

The main text presented simulation results in Table \ref{table_varying_prop_correct_sites} based on $500$ simulations, assuming no covariate mismatch. Here, to provide further insights into the site-specific estimates, we provide simulation results in Figure \ref{figure_site_specific_simulation}. The results show that in cases where some sites fail to properly specify their working models, AIPW estimators exhibit significant bias, while the multiply robust estimators are capable of accurately recovering the true TATE. This can be attributed to the fact that the additional candidate model closely approximates the true underlying models. These findings underscore the enhanced safeguard against model misspecification provided by the multiply robust estimators.

\begin{figure}[!htbp]
     \centering
     \begin{subfigure}[h]{\textwidth}
         \centering
         \includegraphics[width=0.85\textwidth]{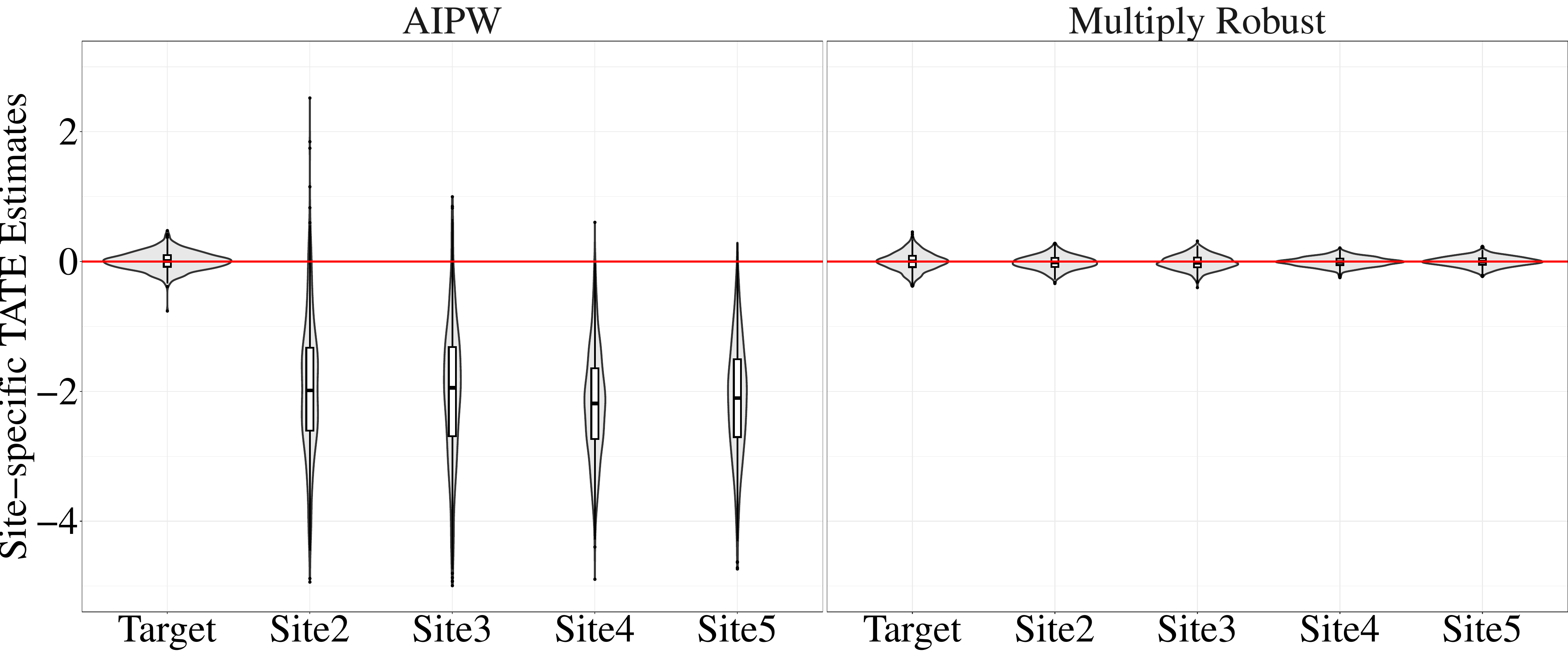}
         \caption{No Site Correctly Specify Models ($C = 0$)}
         \label{figure_site_specific_c0}
     \end{subfigure}
     \vspace{0.5em}
     \vfill
     \begin{subfigure}[h]{\textwidth}
         \centering
         \includegraphics[width=0.85\textwidth]{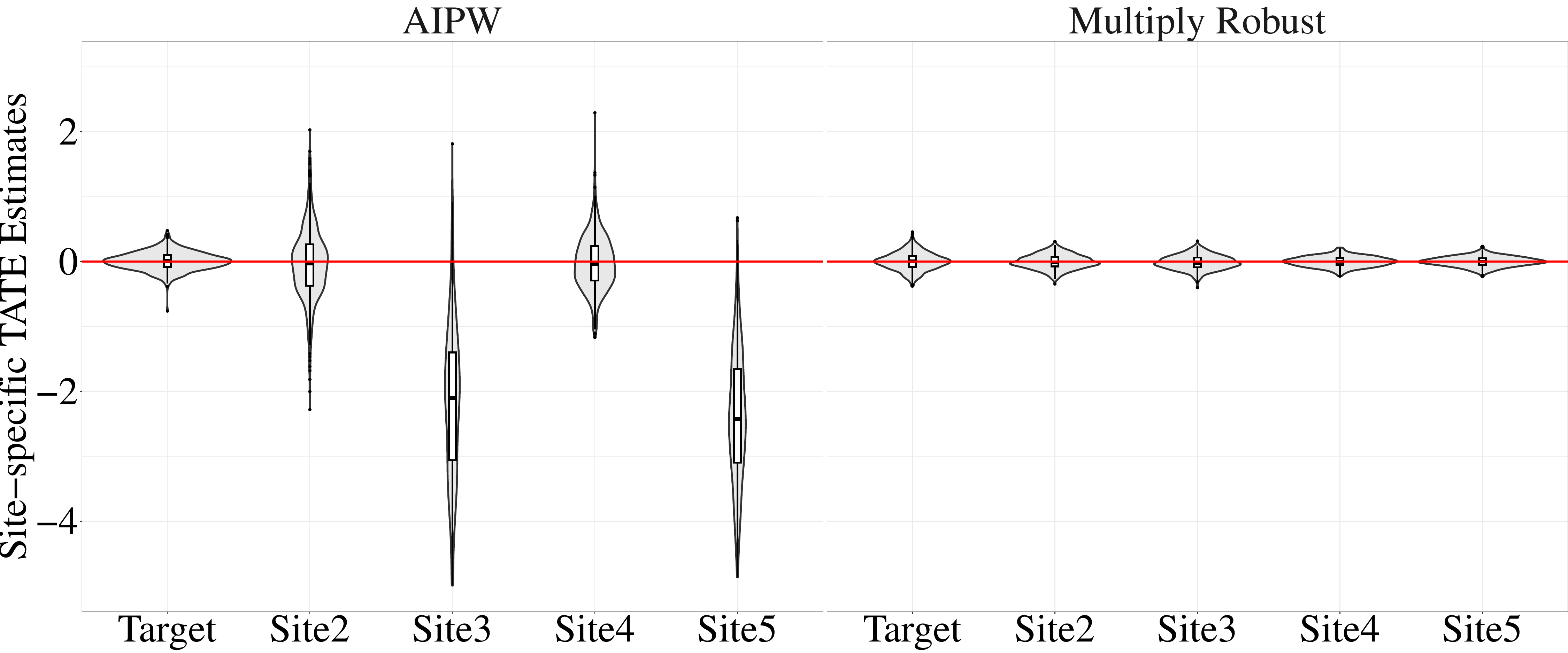}
         \caption{Site 2 and Site 4 Correctly Specify Models ($C = 1/2$)}
         \label{figure_site_specific_c0.5}
     \end{subfigure}
     \vspace{0.5em}
     \vfill
     \begin{subfigure}[h]{\textwidth}
         \centering
         \includegraphics[width=0.85\textwidth]{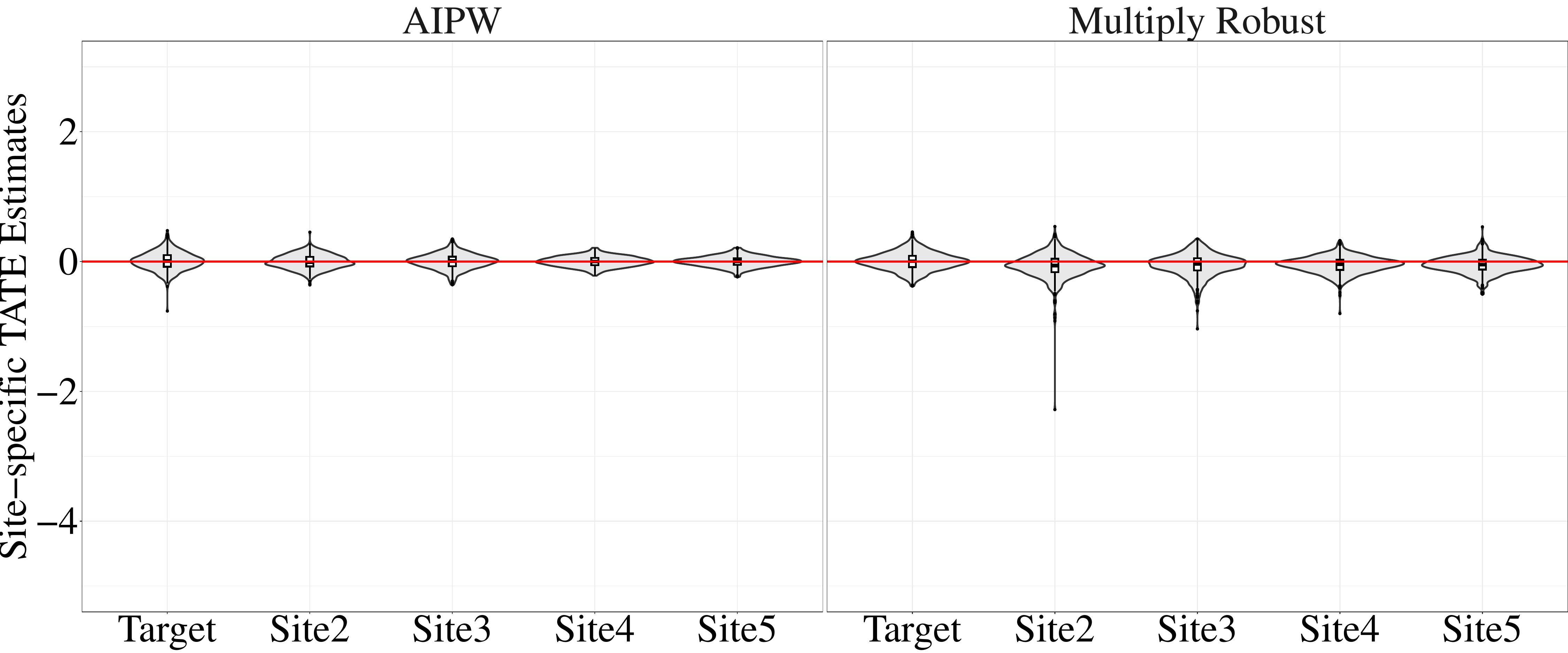}
         \caption{All Sites Correctly Specify Models ($C = 1$)}
         \label{figure_site_specific_c1}
     \end{subfigure}
        \caption{Results of site-specific estimation based on $500$ simulated data sets in three (mis)specification settings.}
        \label{figure_site_specific_simulation}
\end{figure}

\subsection{Reproducibility and computing resources}

All experiments in this study were performed using the statistical programming language \texttt{R} (version 4.2.2). The package \texttt{sn} (version 2.1.0) was employed for generating covariates that follow a skewed-normal distribution within each site. To solve the density ratio estimating equations, we utilized the \texttt{rootSolve} package (version 1.8.2.3). For the estimation of adaptive ensemble weights based on penalized regression of site-specific influence functions, we employed the \texttt{glmnet} package (version 4.1-4).

To enhance computational efficiency, parallel computing packages were employed. Specifically, the packages \texttt{foreach} (version 1.5.2) and \texttt{doParallel} were employed to facilitate the replication of experiments. For the purpose of model mixing in multiply robust estimation, we employed the \texttt{parallel} package (version 4.2.2). The replication of experiments was carried out using ten CPU cores, while the implementation of the model mixing algorithm utilized five CPU cores.

\section{Proofs}
\subsection{Proof of Theorem \ref{theorem_identification}}
    If Assumptions \ref{assumption_consistency} - \ref{assumption_target_positivity} hold, the mean counterfactual outcomes in the target population can be identified using data from the target site;  
    \begin{equation}
    \begin{aligned}[b]
        \mu_{a, T} & = E\left\{ Y_i{(a)} \mid R_i = T\right\} \\
        & = E [ E\left\{ Y_i{(a)} \mid V_{i} = v, R_i = T \right\} \mid R_i = T ] \\ 
        & = E [ E\left\{ Y_i{(a)} \mid V_{i} = v, A_i = a, R_i = T \right\} \mid R_i = T ] \\ 
        & = E [ E\left\{ Y_i \mid V_{i} = v, A_i = a, R_i = T \right\} \mid R_i = T ]. 
    \end{aligned} 
    \end{equation}
The second line follows the law of total expectation; the third line follows Assumption \ref{assumption_target_unconfoundedness}; the last line follows Assumption \ref{assumption_consistency}. \\
    
If Assumptions \ref{assumption_consistency}, \ref{assumption_source_unconfoundedness} - \ref{assumption_selection_positivity} hold, the mean counterfactual outcomes in the target population can be identified using data from source sites; 
    \begin{equation}
    \begin{aligned}[b]
        \mu_{a, T} & = E\left\{ Y_i{(a)} \mid R_i = T \right\} \\
        & = E [ E\left\{ Y_i{(a)} \mid V_{i} = v, R_i = T \right\} \mid R_i = T ] \\ 
        & = E [ E\left\{ Y_i{(a)} \mid V_{i} = v, R_i = k \right\} \mid R_i = T ] \\ 
        & = E\left( E [ E\left\{ Y_i{(a)}  \mid {X}_{i} = x, V_{i} = v, R_i = k \right\} \mid V_{i} = v, R_i = k ] \mid R_i = T \right) \\ 
        & = E\left( E [ E\left\{ Y_i{(a)}  \mid {X}_{i} = x, R_i = k \right\} \mid V_{i} = v, R_i = k ] \mid R_i = T \right) \\ 
        & = E\left( E [ E\left\{ Y_i(a)  \mid {X}_{i} = x, R_i = k, A_i = a \right\} \mid V_{i} = v, R_i = k ] \mid R_i = T \right) \\
        & = E\left( E [ E\left\{ Y_i  \mid {X}_{i} = x, R_i = k, A_i = a \right\} \mid V_{i} = v, R_i = k ] \mid R_i = T \right). 
    \end{aligned} 
    \end{equation}

The second line follows the law of total expectation; the third line follows Assumption \ref{assumption_selection_unconfoundedness}; the fourth line follows the law of total expectation; the fifth line follows by our setup that $V_i \subseteq X_i$; the sixth line follows Assumption \ref{assumption_source_unconfoundedness} the last line follows Assumption \ref{assumption_consistency}.

\subsection{Lemmas in \cite{li2020demystifying}}
\label{appendix_li_lemmas}
We restate Lemmas \ref{lemma1_mix_propensity_risk} and \ref{lemma2_mix_outcome_risk} that were proved in \cite{li2020demystifying}. These two lemmas together show that the $L_2$ risks of the multiply robust estimators for $\pi_{a,k}$ and $m_{a,k}$ are bounded by the $L_2$ risks of the model with the smallest risks, plus a negligible remainder term. 
\begin{lemma}
\label{lemma1_mix_propensity_risk}
Suppose that for each $j \in \mathcal{J}$, there exists a constant $0 < \epsilon_j < 1/2$ such that $\epsilon_j < \widehat{\pi}_{a, k}^j(x) < 1-\epsilon_j$ for all $x$. Then 
\begin{equation}
    E\left(\left\| \widehat{\pi}_{a, k} -\pi_{a, k} \right\|^2\right) \leq \inf _{j \in \mathcal{J}} \frac{2}{\epsilon_j^2} E\left(\|\widehat{\pi}_{a, k}^j-\pi_{a, k} \|^2\right)+\frac{2 \log (J)}{n_k - n_k^{\text{train}}}
\end{equation}
where $\widehat{\pi}_{a, k}(x) = \sum_{j =1}^{J} \widehat{\Lambda}_j \widehat{\pi}^j_{a, k}(x)$. 
\end{lemma}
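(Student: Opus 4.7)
I would prove Lemma \ref{lemma1_mix_propensity_risk} by the classical aggregation / model-mixing argument of Yang (2000, 2004), which the authors cite. The strategy is three-fold: relate the squared-error risk of the mixture estimator to an averaged KL-divergence risk along the validation sequence, establish a log-loss regret bound against the best single model via a telescoping identity for the Bayesian-type cumulative likelihood, and translate KL back to squared error using the boundedness assumption on the candidate estimators.

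The first ingredient is the telescoping identity. For $q > n_k^{\text{train}}$, define the one-step mixture predictive density $\widehat{p}_q(A_q\mid X_q) = \sum_{j} \widehat{\Lambda}_{j,q}\bigl[\widehat{\pi}^j_{a,k}(X_q)\bigr]^{A_q}\bigl[1-\widehat{\pi}^j_{a,k}(X_q)\bigr]^{1-A_q}$ and the cumulative likelihoods $M_q^j = \prod_{r=n_k^{\text{train}}+1}^{q}\bigl[\widehat{\pi}^j_{a,k}(X_r)\bigr]^{A_r}\bigl[1-\widehat{\pi}^j_{a,k}(X_r)\bigr]^{1-A_r}$, with the convention $M_{n_k^{\text{train}}}^j = 1$. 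The sequential weights in \eqref{equation_mix_propensity} satisfy $\widehat{\Lambda}_{j,q} = M_{q-1}^j / \sum_{j'} M_{q-1}^{j'}$, so that $\widehat{p}_q = \bigl(\sum_j M_q^j\bigr)/\bigl(\sum_j M_{q-1}^j\bigr)$ and the product telescopes to $\prod_q \widehat{p}_q = J^{-1}\sum_j M_{n_k}^j \geq J^{-1} M_{n_k}^{j^*}$ for any fixed $j^*\in\mathcal{J}$. Taking logarithms gives the pointwise log-loss regret $\sum_q \log p_q^{j^*} - \sum_q \log \widehat{p}_q \leq \log J$.

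Next I would pass to KL divergences. Conditioning on $X_q$ and the past validation pairs $\mathcal{F}_{q-1}$ and using $A_q\mid X_q \sim \mathrm{Bern}(\pi_{a,k}(X_q))$, the expected negative log-likelihood decomposes as the Bernoulli entropy plus a KL term, and the entropy terms cancel when subtracting the individual-model log-loss from the mixture log-loss. Setting $\widehat{\pi}_{\text{mix},q}(x) = \sum_j \widehat{\Lambda}_{j,q}\widehat{\pi}^j_{a,k}(x)$ and $n_v = n_k - n_k^{\text{train}}$, this yields
\begin{equation*}
\sum_{q=n_k^{\text{train}}+1}^{n_k} E\bigl[KL\bigl(\pi_{a,k}\,\big\|\,\widehat{\pi}_{\text{mix},q}\bigr)\bigr] \;\leq\; n_v\, E\bigl[KL\bigl(\pi_{a,k}\,\big\|\,\widehat{\pi}^{j^*}_{a,k}\bigr)\bigr] + \log J.
\end{equation*}
To finish, two convexity-type facts close the argument. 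Since $\widehat{\pi}_{a,k}(x) = n_v^{-1}\sum_q \widehat{\pi}_{\text{mix},q}(x)$, Jensen's inequality applied to the convex map $t\mapsto (t-\pi_{a,k}(x))^2$ gives $\|\widehat{\pi}_{a,k}-\pi_{a,k}\|^2 \leq n_v^{-1}\sum_q \|\widehat{\pi}_{\text{mix},q}-\pi_{a,k}\|^2$. Pinsker's inequality specialized to Bernoulli, $KL(p\|q)\geq 2(p-q)^2$, lower-bounds the left-hand KL in terms of squared error; the elementary upper bound $KL(p\|q)\leq (p-q)^2/[q(1-q)]$, combined with $\widehat{\pi}^{j^*}_{a,k}\in(\epsilon_{j^*},1-\epsilon_{j^*})$ and $\epsilon_{j^*}<1/2$ (so that $q(1-q)\geq \epsilon_{j^*}/2$ and hence $1/[q(1-q)]\leq 2/\epsilon_{j^*}^2$), upper-bounds the right-hand KL. Dividing by $n_v$, assembling the inequalities, and taking $\inf_{j^*\in\mathcal{J}}$ delivers the stated bound.

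The delicate point I expect is the measurability/filtration bookkeeping needed to convert the pointwise log-likelihood regret into an inequality in expectation: $\widehat{\Lambda}_{j,q}$ is random through both the training data and the past validation pairs $(X_r,A_r)_{r<q}$, so the cross-entropy decomposition must be invoked inside iterated expectations, and independence between training and validation must be leveraged for the $j^*$ term. Tracking the constants $2$ and $2/\epsilon_{j^*}^2$ cleanly through the KL-to-squared-error translation is routine but should be done carefully once the aggregation step is in place.
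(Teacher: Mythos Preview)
Your argument is correct and is exactly the Yang-type aggregation proof that underlies this result; the paper does not actually prove the lemma but merely restates it from \cite{li2020demystifying}, so your sketch supplies what the paper defers to that reference. One small bookkeeping slip: from $q(1-q)\geq \epsilon_{j^*}(1-\epsilon_{j^*})>\epsilon_{j^*}^2$ you obtain $1/[q(1-q)]<1/\epsilon_{j^*}^2\leq 2/\epsilon_{j^*}^2$ directly, rather than via the intermediate $q(1-q)\geq\epsilon_{j^*}/2$ (which would only give $2/\epsilon_{j^*}$); the final constant is correct either way.
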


If the candidate propensity models are parametric and one of them is correctly specified for $\pi_{a, k}$, then $\inf _{j \in \mathcal{J}} \frac{2}{\epsilon_j^2} E( \|\widehat{\pi}_{a, k}^j-\pi_{a, k} \|^2 )$ converges at a rate of $1/n$. If the candidate propensity models are non-parametric, then $\inf _{j \in \mathcal{J}} \frac{2}{\epsilon_j^2} E( \|\widehat{\pi}_{a, k}^j-\pi_{a, k} \|^2 )$ converges at a rate slower than $1/n$. The remainder term $\frac{2 \log (J)}{n_k - n_k^{\text{train}}}$ converges at the rate $1 / n_k - n_k^{\text{train}}$, so it vanishes at a faster rate than the statistical risks of candidate models themselves. 

\begin{lemma}
\label{lemma2_mix_outcome_risk}
Suppose we have a continuous outcome and follow the continuous outcome model mixing algorithm presented in Appendix \ref{appendix_multiply_robust_outcome}. Suppose there exist constants $C_1, C_2 > 0$ such that $\sup_{l \in \mathcal{L}} | \hat{m}_{a, k}^l(x) - m_{a, k}(x) | \leq C_1$ for all $x$ and the subexponential norm of $Y - m_{a, k}(x)$ given $X = x$ is bounded above by $C_2$ for all $x$. Then for $a \in \{0, 1\}$
\begin{equation*}
    E \left( \left\| \widehat{m}_{a, k} - m_{a, k}  \right\|^2\right) \leq \inf _{l \in \mathcal{L}} E \left( \| \widehat{m}_{a, k}^l - m_{a, k} \|^2 \right) + \frac{ \log (L)}{\kappa \left( n_{k, a} - n_{k, a}^{\text{train}} \right)}
\end{equation*}
for 
\begin{equation}
    0<\kappa \leq \max \left[\frac{1}{16  { e } C_1 C_2}, \frac{\exp \left\{C_1\left(8  { e } C_2\right)^{-1}\right\}}{4 \mathcal{M}_2\left\{\left(4  { e } C_2\right)^{-1}\right\}+16 C_1^2 \mathcal{M}_0\left\{\left(4 { e } C_2\right)^{-1}\right\}}\right]
\end{equation}
where $\mathcal{M}_0(t)=2 \exp \left(2 {e}^2 C_2^2 t^2\right), \mathcal{M}_2(t)=16 \sqrt{ } 2 C_2^2 \exp \left(8 {e}^4 C_2^2 t^2\right) \text { and } {e}=\exp (1)$.
\end{lemma}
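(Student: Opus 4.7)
The plan is to prove this oracle inequality via the classical exponentially weighted aggregation (EWA) analysis of Yang and Catoni, carried out conditionally on the training fold $D^{\text{train}}_{k,a}$ (so that the candidates $\widehat{m}^l := \widehat{m}^l_{a,n^{\text{train}}_{k,a}}$ can be treated as deterministic functions) and then with the outer expectation taken at the end. Write $M = n_{k,a} - n^{\text{train}}_{k,a}$, index the validation points by $(X_i, Y_i)$, $i = 1, \dots, M$, and set $\widehat{m}_i(x) = \sum_l \widehat{\Omega}_{l,i}\,\widehat{m}^l(x)$ for the sequential mixture. Because $\widehat{\Omega}_l = M^{-1} \sum_i \widehat{\Omega}_{l,i}$, the final ensemble satisfies $\widehat{m}_{a,k} = M^{-1} \sum_i \widehat{m}_i$, and convexity of $f \mapsto \|f - m_{a,k}\|^2$ gives the online-to-batch reduction
\begin{equation*}
\|\widehat{m}_{a,k} - m_{a,k}\|^2 \;\leq\; \frac{1}{M} \sum_{i=1}^M \|\widehat{m}_i - m_{a,k}\|^2,
\end{equation*}
so it suffices to bound the sum of the sequential risks.

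The backbone of the argument is the Catoni potential $W_i = L^{-1} \sum_l \exp\bigl(-\kappa \sum_{q \leq i}(Y_q - \widehat{m}^l(X_q))^2\bigr)$ with $W_0 = 1$. Discarding all but the smallest cumulative loss yields $-\kappa^{-1}\log W_M \leq \inf_l \sum_{i=1}^M (Y_i - \widehat{m}^l(X_i))^2 + \kappa^{-1}\log L$, while the multiplicative update for $\widehat{\Omega}_{l,i}$ gives the telescoping identity
\begin{equation*}
-\kappa^{-1}\log W_M \;=\; \sum_{i=1}^M -\kappa^{-1}\log\!\Bigl(\sum_l \widehat{\Omega}_{l,i}\,e^{-\kappa(Y_i - \widehat{m}^l(X_i))^2}\Bigr).
\end{equation*}
Each summand on the right is a cumulant generating function of $\kappa(Y_i - \widehat{m}^l(X_i))^2$ under the data-adaptive weights. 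Were squared loss globally exp-concave at scale $\kappa$, Jensen would immediately give $-\kappa^{-1}\log \sum_l \widehat{\Omega}_{l,i}\,e^{-\kappa \ell_{l,i}} \geq (Y_i - \widehat{m}_i(X_i))^2$, but it is not, so one conditions on $\mathcal{F}_{i-1}$ and Taylor-expands the exponential to second order, controlling the first- and second-order terms using $|\widehat{m}^l(x) - m_{a,k}(x)| \leq C_1$ and the subexponential tail bound $\|Y - m_{a,k}(X)\|_{\psi_1} \leq C_2$. The explicit quantities $\mathcal{M}_0(t)$ and $\mathcal{M}_2(t)$ in the threshold are precisely the moment generating function estimates produced by this expansion at the two relevant scales, and the numerical range for $\kappa$ is exactly what is required so that, after integrating over $(X_i, Y_i)$, the per-step log-MGF is bounded below by $\|\widehat{m}_i - m_{a,k}\|^2$ plus the irreducible noise variance.

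Putting the two bounds together, taking expectation over the validation sample, and then unconditioning on the training fold, the common noise contributions of $M\,\mathrm{E}[\mathrm{Var}(Y\mid X)]$ cancel between the two sides, leaving
\begin{equation*}
\sum_{i=1}^M \mathrm{E}\bigl[\|\widehat{m}_i - m_{a,k}\|^2\bigr] \;\leq\; M \inf_{l \in \mathcal{L}} \mathrm{E}\bigl[\|\widehat{m}^l_{a,k} - m_{a,k}\|^2\bigr] + \frac{\log L}{\kappa}.
\end{equation*}
Dividing by $M$ and combining with the Jensen step from the first paragraph delivers the stated oracle inequality. The main obstacle is the Taylor/MGF step above: squared error is not exp-concave on an unbounded domain, so the convenient ``exp-concavity implies $O(\log L / \kappa)$ regret'' shortcut fails, and one must instead track the expansion of $\mathrm{E}\,e^{-\kappa(Y - \widehat{m}^l(X))^2}$ to second order using subexponential tail controls. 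It is precisely this delicate MGF bookkeeping that forces $\kappa$ to lie below the explicit threshold in the lemma; the rest of the argument is standard EWA bookkeeping once that step is in place.
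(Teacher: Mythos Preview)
Your proposal is correct, but there is nothing in the paper to compare it against: the paper does not prove this lemma at all. Appendix~\ref{appendix_li_lemmas} explicitly states that Lemmas~\ref{lemma1_mix_propensity_risk} and~\ref{lemma2_mix_outcome_risk} ``were proved in \cite{li2020demystifying}'' and simply restates them without argument. So you have written a proof where the paper offers only a citation.

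That said, the route you sketch---conditioning on the training fold, the Catoni potential $W_i$ with telescoping against the multiplicative update for $\widehat{\Omega}_{l,i}$, the online-to-batch reduction via convexity of $f\mapsto\|f-m_{a,k}\|^2$, and the second-order MGF expansion under the subexponential tail assumption to substitute for the missing exp-concavity of squared loss---is exactly the standard EWA analysis of Yang and Catoni that underlies the result in \cite{li2020demystifying}. Your identification of the Taylor/MGF step as the crux, and of the displayed threshold on $\kappa$ (with $\mathcal{M}_0,\mathcal{M}_2$) as the output of that bookkeeping, is accurate. In short: the paper outsources the proof; your sketch reproduces the argument the outsourced reference would give.
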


\subsection{Influence function of site-specific estimator}
\label{appendix_site_estimator_influence}
In this section, we derive the general form of the influence functions for the site-specific estimators, following a similar derivation in \cite{zeng2023efficient}. The primary difference is that we use the density ratio weights $\zeta_{k}(V)$ instead of the inverse probability of selection weights. Define $Z_i = (Y_i, A_i, X_i, R_i)$ with the (partial) baseline covariates in the target site as $V_i \subseteq X_i$. The general form for our efficient influence function is 
\begin{equation}
    \begin{aligned}[b]
    \xi_{a, k}(Z_i) & =\frac{1}{P(R_i=k)} \Biggl[ \frac{I(A_i=a, R_i=k)}{ \pi_{a, k}(X_i)} \zeta_{k}(V_i) \{ Y_i -m_{a, k}(X_i) \} \Biggr] \\
    & + \frac{1}{P(R_i =k)} 
    \Biggl[ I(R_i=k) \zeta_{k}(V_i) \{ m_{a,k}(X_i)-\tau_{a, k}(V_i) \} \Biggr] \\ 
    & + \frac{1}{P(R_i=T)} \Biggl[ I(R_i=T)  \tau_{a, k}(V_i) \Biggr] - \mu_{a, T}.
    \end{aligned}
    \label{equation_influence_function}
\end{equation}

The target site estimator has the following form, 
\begin{equation}
    \widehat{\mu}_{a, T} = \frac{1}{n_T} \sum_{i = 1}^{n} \Biggl[ \frac{I(A_i=a, R_i=T)}{\pi_{a, T}(V_{i})} \left\{ Y_i- m_{a, T}(V_{i}) \right\} + I(R_i=T) m_{a, T}(V_{i}) \Biggr]. 
\end{equation}
This is a standard AIPW estimator whose influence function is derived in \cite{dahabreh2019generalizing, han2022privacy} as 

\begin{equation}
    \xi_{a, T}(Z_i) =\frac{1}{P(R_i=T)} \Biggl[ \frac{I(A_i=a, R_i=T)}{ \pi_{a, T}(V_i)} \{ Y_i-m_{a, T}(V_i) \} + I(R_i=T)  m_{a, T}(V_i) \Biggr] - \mu_{a, T}.
\end{equation}

\subsection{Proof of Theorem \ref{theorem_site_consistency_normality}}

We first prove that given the conditions in Theorem \ref{theorem_site_consistency_normality}, the site-specific estimator is a consistent estimator of $\mu_{a, T}$, formalized in the following Lemma: 
\begin{lemma}
    \label{lemma_site_consistency}
    Given Assumptions \ref{assumption_consistency} - \ref{assumption_selection_positivity}, and \ref{lemma1_mix_propensity_risk} and \ref{lemma2_mix_outcome_risk}, 
    if one of \ref{assumption_multiply_robust_B1} or \ref{assumption_multiply_robust_B2} and one of \ref{assumption_multiply_robust_C1} or \ref{assumption_multiply_robust_C2} hold, then $\widehat{\mu}_{a, k}$ is a consistent estimator for $\mu_{a, T}$. 
\end{lemma}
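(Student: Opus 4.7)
I would separate the argument into a probabilistic step (convergence of $\widehat{\mu}_{a,k}$ to its population limit) and an algebraic step (verification that this limit equals $\mu_{a,T}$ in each of the four multiply-robust sub-cases). By Lemmas \ref{lemma1_mix_propensity_risk} and \ref{lemma2_mix_outcome_risk}, the model-mixing estimators satisfy $\|\widehat{\pi}_{a,k} - \bar{\pi}_{a,k}\| = o_p(1)$ and $\|\widehat{m}_{a,k} - \bar{m}_{a,k}\| = o_p(1)$; the analogous convergences for $\widehat{\zeta}_k$ and $\widehat{\tau}_{a,k}$ hold by hypothesis. Combining these with uniform boundedness and a standard uniform law of large numbers (applied over the $R=k$ stratum for the first two sums and over the $R=T$ stratum for the third), I would conclude $\widehat{\mu}_{a,k} \xrightarrow{p} \Psi(\bar{\eta})$, where
\[\Psi(\bar{\eta}) = E\!\left[\frac{I(A=a)}{\bar{\pi}_{a,k}(X)}\bar{\zeta}_k(V)\{Y - \bar{m}_{a,k}(X)\} + \bar{\zeta}_k(V)\{\bar{m}_{a,k}(X) - \bar{\tau}_{a,k}(V)\} \,\big|\, R=k\right] + E\bigl[\bar{\tau}_{a,k}(V) \,\big|\, R=T\bigr].\]

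\textbf{Multiply robust identity.} The heart of the proof is showing $\Psi(\bar{\eta}) = \mu_{a,T}$ whenever one of \ref{assumption_multiply_robust_B1}, \ref{assumption_multiply_robust_B2} and one of \ref{assumption_multiply_robust_C1}, \ref{assumption_multiply_robust_C2} holds. Three facts drive every case: (i) Assumptions \ref{assumption_consistency} and \ref{assumption_source_unconfoundedness} give $E[Y\mid X, A=a, R=k] = m_{a,k}(X)$; (ii) by construction, $\tau_{a,k}(V) = E[m_{a,k}(X)\mid V, R=k]$; and (iii) for any integrable function $g$ of $V$, the density-ratio identity $E[\zeta_k(V)g(V)\mid R=k] = E[g(V)\mid R=T]$ holds. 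Under (\ref{assumption_multiply_robust_B1}, \ref{assumption_multiply_robust_C1}), conditioning the first term on $(X, R=k)$ and using (i) collapses it to $E[\zeta_k(V)\{m_{a,k}(X) - \bar{m}_{a,k}(X)\}\mid R=k]$; combining with the middle term gives $E[\zeta_k(V)\{m_{a,k}(X) - \bar{\tau}_{a,k}(V)\}\mid R=k]$, which via (ii) and (iii) equals $E[\tau_{a,k}(V) - \bar{\tau}_{a,k}(V)\mid R=T]$, so $\Psi(\bar{\eta}) = E[\tau_{a,k}(V)\mid R=T] = \mu_{a,T}$ by Theorem \ref{theorem_identification}. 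Under (\ref{assumption_multiply_robust_B2}, \ref{assumption_multiply_robust_C2}), the first term vanishes by (i) and the middle term vanishes by (ii), again leaving $E[\tau_{a,k}(V)\mid R=T] = \mu_{a,T}$. The mixed cases follow analogously: in (\ref{assumption_multiply_robust_B1}, \ref{assumption_multiply_robust_C2}) the first and middle terms combine to $E[\bar{\zeta}_k(V)\{m_{a,k}(X) - \tau_{a,k}(V)\}\mid R=k]$, which vanishes by (ii); in (\ref{assumption_multiply_robust_B2}, \ref{assumption_multiply_robust_C1}) the first term vanishes by (i), and (iii) converts the middle term's $\tau$-projection into the target-population expectation that cancels the third term's contribution up to $\mu_{a,T}$.

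\textbf{Main obstacle.} The principal difficulty is the case-by-case algebra of the multiply-robust identity, especially recognizing that the middle ``augmentation'' term of $\phi_{a,k}$ functions as the bias-correcting device whenever one member of each pair $(\pi, m)$ and $(\zeta, \tau)$ is only partially consistent. The most delicate technical step is the combined use of the tower property and the density-ratio change of measure (iii): this is precisely what transports the source-based expectation $E[\cdot\mid R=k]$ into the target-based $E[\cdot\mid R=T]$, and so what makes a source-$k$ estimator consistent for $\mu_{a,T}$ rather than for a source-specific mean. The probabilistic step of reducing $\widehat{\mu}_{a,k}$ to $\Psi(\bar{\eta})$ is largely routine given boundedness and the $L_2$ risk bounds provided by Lemmas \ref{lemma1_mix_propensity_risk} and \ref{lemma2_mix_outcome_risk}.
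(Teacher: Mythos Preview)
Your proposal is correct and follows essentially the same route as the paper's proof: a probabilistic reduction to the population limit (via the $o_p(1)$ convergences of the nuisance estimators, uniform boundedness, and the law of large numbers) followed by a four-case verification that this limit equals $\mu_{a,T}$. The paper organizes each case by writing out the estimator with the limiting nuisances and decomposing into three terms $T_1,T_2,T_3$; your organization around the three identities (i)--(iii) is a more compact packaging of the same iterated-expectation and density-ratio change-of-measure arguments, and your case pairings $(\mathrm{B1},\mathrm{C1}),(\mathrm{B2},\mathrm{C2}),(\mathrm{B1},\mathrm{C2}),(\mathrm{B2},\mathrm{C1})$ coincide with the paper's Cases~2, 3/4, 1, 3/4 respectively.
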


\begin{proof}[Proof of Lemma \ref{lemma_site_consistency}]
We divide the proof into four cases and show that $\widehat{\mu}_{a, k}$ achieves consistency. 
\begin{enumerate}[label=Case \arabic*: ]
    \item When $\overline{\pi}_{a, k}^j = \pi_{a, k}$ and $\overline{\tau}_{a, k} = \tau_{a, k}$ \\ 
    By assumption, we have $\| \widehat{\pi}_{a, k}^j - {\pi}_{a, k} \| = o_p(1)$ and $\| \widehat{\tau}_{a, k} - {\tau}_{a, k} \| = o_p(1)$. Given Lemma \ref{lemma1_mix_propensity_risk} for $\widehat{\pi}_{a,k}$, $\| \widehat{\pi}_{a, k} - {\pi}_{a, k} \| = o_p(1)$. Define $\overline{m}_{a, k}(x) = \frac{1}{L}\sum_{l = 1}^L \overline{m}_{a, k}^l(x)$. By definition, $\| \widehat{m}_{a, k}^l - \overline{m}_{a, k}^l \| = o_p(1)$, so $\| \widehat{m}_{a, k} - \overline{m}_{a, k} \| = o_p(1)$. Together with $\| \widehat{\zeta}_{k} - \overline{\zeta}_{k} \| = o_p(1)$, we can re-write the site-specific estimator as 
    \begin{equation}
    \begin{aligned}[b]
    \widehat{\mu}_{a, k} = & \frac{1}{n} \sum_{i = 1}^{n} \biggl[ \frac{n}{n_k} \frac{I(A_i=a, R_i=k)}{\pi_{a, k}(X_{i})} {\overline{\zeta}}_k (V_i) \Bigl\{Y_i-\overline{m}_{a, k}(X_{i})\Bigr\} \biggr] \\
    & + \frac{1}{n} \sum_{i = 1}^{n} \biggl[ \frac{n}{n_k} I(R_i=k) {\overline{\zeta}}_k (V_i) \Bigl\{ \overline{m}_{a, k}(X_{i})-\tau_{a, k}(V_i) \Bigr\} \biggr] \\
    & + \frac{1}{n} \sum_{i = 1}^{n} \frac{n}{n_T} I(R_i=T) \tau_{a, k}(V_i) \\
    & + o_p(1) 
    \end{aligned}
    \end{equation}
    By assumption of i.i.d. units within each site and the law of large numbers, $\widehat{\mu}_{a, k}$ converges in probability to
    {\allowdisplaybreaks
    \begin{equation}
    \begin{aligned}[b]
    & E \biggl[ \frac{n}{n_k} \frac{I(A_i=a, R_i=k)}{\pi_{a, k}(X_{i})} {\overline{\zeta}}_k (V_i) \Bigl\{Y_i-\overline{m}_{a, k}(X_{i})\Bigr\} \biggr] \\
    + & E \biggl[ \frac{n}{n_k} I(R_i=k) {\overline{\zeta}}_k (V_i) \Bigl\{ \overline{m}_{a, k}(X_{i})-\tau_{a, k}(V_i) \Bigr\} \biggr] \\
    + & E \biggl[ \frac{n}{n_T} I(R_i=T) \tau_{a, k}(V_i) \biggr] \\
    = & \underbrace{ E \biggl[ \frac{n}{n_k} I(R_i=k) {\overline{\zeta}}_k (V_i) \Bigl\{ \frac{I(A_i=a)}{\pi_{a, k}(X_{i})} Y_i - \tau_{a, k}(V_i) \Bigr\} \biggr] }_{T_1} \\
    - & \underbrace{ E \biggl[ \frac{n}{n_k} I(R_i=k) {\overline{\zeta}}_k (V_i) \Bigl\{ \frac{I(A_i=a)}{\pi_{a, k}(X_{i})} - 1 \Bigr\} \overline{m}_{a, k}(X_{i}) \biggr] }_{T_2} \\
    + & \underbrace{ E \biggl[ \frac{n}{n_T} I(R_i=T) \tau_{a, k}(V_i) \biggr] }_{T_3}. 
    \end{aligned}
    \end{equation}}
    
    Given that $\pi_{a, k}$ is the true propensity score model, $E(T_2) = 0$ since
    \begin{equation}
        E \Bigl\{ \frac{ I(A_i=a) }{ \pi_{a, k}(X_{i})} - 1 \mid X_i, R_i = k \Bigr\} = \frac{ P(A_i = a \mid X_i, R_i = k)}{\pi_{a, k}(X_i)} - 1 = 0. 
    \end{equation}
    Similarly, given that $\tau_{a, k} = E\{ m_{a,k}(X_i) \mid V_i, R_i = k \}$ is the true model, 
    \begin{equation}
        \begin{aligned}[b]
        E(T_3) & = E \{ E(T_3 \mid R_i = T) \} \\
        & = \frac{n}{n_T} P(R_i = T) E( \tau_{a, k}(V_i) \mid R_i = T) \\
        & = E( \tau_{a, k}(V_i) \mid R_i = T)\\
        & = \mu_{a, T}. 
        \end{aligned}
    \end{equation}
    Finally, we consider $T_1$; 
    {\allowdisplaybreaks
    \begin{align}
        E(T_1) 
        &= E \biggl[ \frac{n}{n_k} I(R_i=k) {\overline{\zeta}}_k (V_i) \Bigl\{ \frac{I(A_i=a)}{\pi_{a, k}(X_{i})} Y_i - \tau_{a, k}(V_i) \Bigr\} \biggr] \nonumber \\
        &= E \Biggl( E \Bigl[ \frac{n}{n_k} I(R_i=k)  \overline{\zeta}_k (V_i) \Bigl\{\frac{I(A_i=a)}{\pi_{a, k}(X_{i})} Y_i-\tau_{a, k}(V_i)\Bigr\} \mid V_i \Bigr] \Biggr) \nonumber \\
        &= E \Biggl( \frac{n}{n_k} I(R_i=k) {\overline{\zeta}}_k (V_i) \Bigl[ E \Bigl\{ \frac{I(A_i=a)}{\pi_{a, k}(X_{i})} Y_i \mid V_i \Bigr\} -\tau_{a, k}(V_i) \Bigr] \Biggr) \nonumber \\
        &= E \Biggl( \frac{n}{n_k} I(R_i=k) {\overline{\zeta}}_k (V_i) \Bigl[ E \Bigl\{ \frac{I(A_i=a)}{\pi_{a, k}(X_{i})} Y_i \mid V_i, R_i = k \Bigr\} -\tau_{a, k}(V_i) \Bigr] \Biggr) \nonumber \\
        &= E \Biggl\{ \frac{n}{n_k} I(R_i=k) {\overline{\zeta}}_k (V_i) \Biggl( E \Bigl[ E \Bigl\{ \frac{I(A_i=a)}{\pi_{a, k}(X_{i})} Y_i \mid X_i, R_i = k \Bigr\} \mid V_i, R_i = k \Bigl] -\tau_{a, k}(V_i) \Biggr) \Biggr\} \nonumber \\
        &= E \Biggl( \frac{n}{n_k} I(R_i=k) {\overline{\zeta}}_k (V_i) \Bigl[ E \Bigl\{ E ( Y_i \mid A_i = a, X_i, R_i = k ) \mid V_i, R_i = k \Bigl\} -\tau_{a, k}(V_i) \Bigr] \Biggr) \nonumber \\
        &= E \Biggl( \frac{n}{n_k} I(R_i=k) {\overline{\zeta}}_k (V_i) \Bigl[ E \Bigl\{ m_{a, k}(X_i) \mid V_i, R_i = k \Bigl\} -\tau_{a, k}(V_i) \Bigr] \Biggr) \nonumber \\
        &= 0, \text{ which completes the proof for Case 1.}
    \end{align}}

    \item When $\overline{\pi}_{a, k}^j = \pi_{a, k}$ and $\overline{\zeta}_{k} = \zeta_{k}$ \\
    By assumption, we have $\| \widehat{\pi}_{a, k}^j - {\pi}_{a, k} \| = o_p(1)$ and $\| \widehat{\zeta}_{k} - {\zeta}_{k} \| = o_p(1)$. Given Lemma \ref{lemma1_mix_propensity_risk} for $\widehat{\pi}_{a,k}$, $\| \widehat{\pi}_{a, k} - {\pi}_{a, k} \| = o_p(1)$. Define $\overline{m}_{a, k}(x) = \frac{1}{L}\sum_{l = 1}^L \overline{m}_{a, k}^l(x)$. By definition, $\| \widehat{m}_{a, k}^l - \overline{m}_{a, k}^l \| = o_p(1)$, so $\| \widehat{m}_{a, k} - \overline{m}_{a, k} \| = o_p(1)$. Together with $\| \widehat{\tau}_{a, k} - \overline{\tau}_{a, k} \| = o_p(1)$, we can write the site-specific estimator as 
    {\allowdisplaybreaks
    \begin{align}
    \widehat{\mu}_{a, k} = & \frac{1}{n} \sum_{i = 1}^{n} \biggl[ \frac{n}{n_k} \frac{I(A_i=a, R_i=k)}{\pi_{a, k}(X_{i})} \zeta_k (V_i) \Bigl\{Y_i-\overline{m}_{a, k}(X_{i})\Bigr\} \biggr] \nonumber \\
    & + \frac{1}{n} \sum_{i = 1}^{n} \biggl[ \frac{n}{n_k} I(R_i=k) \zeta_k (V_i) \Bigl\{ \overline{m}_{a, k}(X_{i})-\overline{\tau}_{a, k}(V_i) \Bigr\} \biggr] \nonumber \\
    & + \frac{1}{n} \sum_{i = 1}^{n} \frac{n}{n_T} I(R_i=T) \overline{\tau}_{a, k}(V_i) \nonumber \\
    & + o_p(1) 
    \end{align}}
    By assumption of i.i.d. units within each site and the law of large numbers, $\widehat{\mu}_{a, k}$ converges in probability to
    {\allowdisplaybreaks
    \begin{align}
    & E \biggl[ \frac{n}{n_k} \frac{I(A_i=a, R_i=k)}{\pi_{a, k}(X_{i})} \zeta_k (V_i) \Bigl\{Y_i-\overline{m}_{a, k}(X_{i})\Bigr\} \biggr] \nonumber \\
    + & E \biggl[ \frac{n}{n_k} I(R_i=k) \zeta_k (V_i) \Bigl\{ \overline{m}_{a, k}(X_{i})-\overline{\tau}_{a, k}(V_i) \Bigr\} \biggr] \nonumber \\
    + & E \biggl[ \frac{n}{n_T} I(R_i=T) \overline{\tau}_{a, k}(V_i) \biggr] \nonumber \\
    = & \underbrace{ E \Bigl\{ \frac{n}{n_k} \frac{I(A_i=a, R_i=k)}{\pi_{a, k}(X_{i})} \zeta_k (V_i) Y_i \Bigr\} }_{T_1} \nonumber \\
    + & \underbrace{ E \biggl[ \frac{n}{n_k} I(R_i=k) \zeta_k (V_i) \overline{m}_{a, k}(X_{i}) \Bigl\{ 1 - \frac{I(A_i=a)}{\pi_{a, k}(X_{i})} \Bigr\} \biggr] }_{T_2} \nonumber \\
    + & \underbrace{ E \biggl[ \overline{\tau}_{a, k}(V_i) \Bigl\{ \frac{n}{n_T} I(R_i=T) - \frac{n}{n_k} I(R_i=k) \zeta_k({V_i}) \Bigr\} \biggr] }_{T_3} 
    \end{align}}

    Given that $\pi_{a, k}$ is the true propensity score model, $E(T_2) = 0$ since
    \begin{equation}
        E \Bigl\{ \frac{ I(A_i=a) }{ \pi_{a, k}(X_{i})} - 1 \mid X_i, R_i = k \Bigr\} = \frac{ P(A_i = a \mid X_i, R_i = k)}{\pi_{a, k}(X_i)} - 1 = 0. 
    \end{equation}

    Also since $\zeta_{k}= {f(V_i | R_i = T)} / {f(V_i | R_i = k)}$ is the true density ratio model, 
    \begin{equation}
        \begin{aligned}[b]
        E(T_3) 
        & = E \biggl( E \biggl[ \overline{\tau}_{a, k}(V_i) \Bigl\{ \frac{n}{n_T} I(R_i=T) - \frac{n}{n_k} I(R_i=k) \zeta_k({V_i}) \Bigr\} \mid V_i \biggr] \biggr)  \\ 
        & = E \biggl( \overline{\tau}_{a, k}(V_i) \biggl[ \frac{n}{n_T} E \Bigl\{ I(R_i=T) \Bigr\} - \frac{n}{n_k} E \Bigl\{ I(R_i=k) \Bigr\} \zeta_k({V_i}) \mid V_i \biggr] \biggr)  \\ 
        & = 0 \text{  following the results from Appendix \ref{appendix_equivalence_density_ratio}. }
        \end{aligned}
    \end{equation}
    Thus, we are left with $T_1$, which equals to $\mu_{a, T}$ by iterative conditioning and Assumptions \ref{assumption_consistency}, \ref{assumption_target_unconfoundedness}, \ref{assumption_source_unconfoundedness} and \ref{assumption_selection_unconfoundedness}. 
\end{enumerate}
Proofs for Case 3, when $\overline{m}_{a, k}^l = m_{a, k}$ and $\overline{\tau}_{a, k} = \tau_{a, k}$ and Case 4, when $\overline{m}_{a, k}^l = m_{a, k}$ and $\overline{\zeta}_{k} = \zeta_{k}$, follow closely to the proofs of Cases 1 and 2 and are omitted here for brevity. 
\end{proof}

\begin{proof}[Proof of Theorem \ref{theorem_site_consistency_normality}]
Given the general form of influence function \eqref{equation_influence_function}, we can decompose the estimation risk of the site-specific estimator as 
\begin{equation}
    \mathbb{P}_n (\widehat{\mu}_{a, k}) -{E} (\mu_{a, T}) =\left(\mathbb{P}_n-{E}\right)(\widehat{\mu}_{a, k}-\mu_{a, T})+\left(\mathbb{P}_n-{E}\right)\mu_{a, T}+{E}(\widehat{\mu}_{a, k}-\mu_{a, T})
\end{equation}

Assuming proper sample splitting methods are employed, as stated in Lemma 2 of \cite{kennedy2022minimax}, we have: 
\begin{equation}
    \left(\mathbb{P}_n - E\right)\left(\widehat{\mu}_{a, k}-\mu_{a, T}\right)=o_p(n^{-1/2}).
\end{equation}

Then, we consider $\left(\mathbb{P}_n - E\right)\mu_{a, T}$. By the Central Limit Theorem, we have $\sqrt{n}\left\{ \mathbb{P}_n(\mu_{a, T}) - E(\mu_{a, T}) \right\} \stackrel{d}{\rightarrow} \mathcal{N}(0, \sigma^2)$, where $\sigma^2$ is given by:
{\allowdisplaybreaks
\begin{align}
    \sigma^2 
    & = E\left\{\xi_{a, k}(Z_i)^2 \right\} \nonumber \\
    & = E \left\{ \frac{1}{P^2(R_i=k)} \Biggl[ \frac{I(A_i=a, R_i=k)}{ \pi_{a, k}(X_i)} \zeta_{k}(V_i) \{ Y_i -m_{a, k}(X_i) \} \Biggr]^2 \right\} \nonumber \\
    & + E \left\{ \frac{1}{P^2(R_i =k)}
    \Biggl[ I(R_i=k) \zeta_{k}(V_i) \{ m_{a,k}(X_i)-\tau_{a, k}(V_i) \} \Biggr]^2 \right\} \nonumber \\ 
    & + E \left\{ \frac{1}{P^2(R_i=T)} \Biggl[ I(R_i=T)  \left\{ \tau_{a, k}(V_i) - \mu_{a, T} \right\} \Biggr]^2 \right\} \nonumber \\ 
    & + \text{Remaining cross-terms}. 
\end{align}}
It can be verified that all remaining cross-terms have an expected value of zero. Hence,
{\allowdisplaybreaks
\begin{align}
    \sigma^2 
    & = E \left\{ \frac{1}{P^2(R_i=k)} \left[ \frac{I(A_i=a, R_i=k)}{ \pi^2_{a, k}(X_i)} \zeta^2_{k}(V_i) \{ Y_i -m_{a, k}(X_i) \}^2  \right] \right\} \nonumber \\
    & + E \left\{ \frac{1}{P^2(R_i =k)} 
     I(R_i=k) \left[ \zeta_{k}(V_i) \{ m_{a,k}(X_i)-\tau_{a, k}(V_i) \} \right]^2 \right\} \nonumber \\ 
    & + E \left[ \frac{1}{P^2(R_i=T)} I(R_i=T) \left\{ \tau_{a, k}(V_i) - \mu_{a, T} \right\}^2 \right] \nonumber \\ 
    & = E \left[ E \left\{ \frac{1}{P^2(R_i=k)} \left[ \frac{I(A_i=a, R_i=k)}{ \pi^2_{a, k}(X_i)} \zeta^2_{k}(V_i) \{ Y_i -m_{a, k}(X_i) \}^2 \right] \mid X_i \right\} \right] \nonumber \\
    & + E \left[ E \left\{ \frac{1}{P^2(R_i =k)} 
     I(R_i=k) \zeta^2_{k}(V_i) \{ m_{a,k}(X_i)-\tau_{a, k}(V_i) \}^2 \mid V_i \right\} \right] \nonumber \\ 
    & + E \left[ \frac{1}{P^2(R_i=T)} I(R_i=T) \left\{ \tau_{a, k}(V_i) - \mu_{a, T} \right\}^2 \right] \nonumber \\ 
    & = E \left[ \frac{P(R_i=k \mid X_i )}{P^2(R_i=k)} \zeta^2_{k}(V_i) E \left\{ \frac{I(A_i=a)}{ \pi^2_{a, k}(X_i)} \{ Y_i -m_{a, k}(X_i) \}^2 \mid X_i, A_i = a, R_i = k \right\} \right] \nonumber \\
    & + E \left\{ \frac{P(R_i=k \mid X_i)}{P^2(R_i =k)} \zeta^2_{k}(V_i) E \left[ 
    \{ m_{a,k}(X_i)-\tau_{a, k}(V_i) \}^2 \mid X_i, R_i = k \right] \right\} \nonumber \\ 
    & + E \left( \frac{P(R_i=T)}{P^2(R_i=T)} E \left[ \left\{ \tau_{a, k}(V_i) - \mu_{a, T} \right\}^2 \mid R_i = T \right] \right) \nonumber \\ 
    & = E \left[ \frac{P(R_i=k \mid V_i)}{P^2(R_i=k)} \zeta^2_{k}(V_i) \frac{\text{Var}\{ Y_i \mid X_i, A_i = a, R_i = k \}}{ \pi_{a, k}(X_i)} \right] \nonumber \\
    & + E \left[ \frac{P(R_i=k \mid V_i)}{P^2(R_i =k)} \zeta^2_{k}(V_i) \text{Var}
    \{ m_{a,k}(X_i) \mid V_i, R_i = k \} \right] \nonumber \\ 
    & + E \left[ \frac{\text{Var} \left\{ \tau_{a, k}(V_i) \mid R_i = T \right\}}{P(R_i=T)} \right]. \label{equation_semiparametric_eff_bound}
\end{align}
}

This expression represents the semiparametric efficiency bound for estimating $\mu_{a, T}$ and is finite given that all nuisance functions are uniformly bounded. Consequently, we can conclude that $(\mathbb{P}_n - E) (\mu_{a, T}) = O_p(n^{-1/2})$. 

Moving forward, we analyze the conditional bias term. Firstly, let us define $E\left\{\widehat{m}_{a, k}(X_{i}) \mid V_i, R_i = k \right\} = \tilde{\tau}_{a, k}(V_{i})$. Given conditions in Theorem \ref{theorem_site_consistency_normality} and Lemma \ref{lemma_site_consistency}, we can rewrite the conditional bias as follows:
{\allowdisplaybreaks
\begin{align}
    E \left( \widehat{\mu}_{a, k}-\mu_{a, T}  \right)
    & = E \left( \frac{n}{n_k} \biggl[ \frac{I(A_i=a, R_i=k)}{\widehat{\pi}_{a, k}(X_{i})} \widehat{\zeta}_{k}(V_{i}) \Bigl\{ m_{a, k}(X_i) -\widehat{m}_{a, k}(X_{i}) \Bigr\} \biggr] \right) \nonumber \\
    & + E \left( \frac{n}{n_k} \biggl[ I(R_i=k) \widehat{\zeta}_{k}(V_{i})
    \Bigl\{ \tilde{\tau}_{a, k}(V_{i}) -\widehat{\tau}_{a, k}(V_{i}) \Bigr\} \biggr] \right) \nonumber \nonumber \\
    & + E \left( \frac{n}{n_T} \biggl[ I(R_i = T) \Bigl\{\widehat{\tau}_{a, k}(V_{i}) - \mu_{a, T}(V_{i}) \Bigr\} \biggr] \right) \nonumber \\ 
    & = E \left( \frac{n}{n_k} \biggl[ \frac{I(A_i=a, R_i=k)}{\widehat{\pi}_{a, k}(X_{i})} \widehat{\zeta}_{k}(V_{i}) \Bigl\{ m_{a, k}(X_i) -\widehat{m}_{a, k}(X_{i}) \Bigr\} \biggr] \right) \nonumber \\
    & + E \left( \frac{n}{n_k} \biggl[ I(R_i=k) \widehat{\zeta}_{k}(V_{i})
    \Bigl\{ \tilde{\tau}_{a, k}(V_{i}) - \tau_{a, k}(V_{i}) \Bigr\} \biggr] \right) \nonumber \\
    & - E \left( \frac{n}{n_k} \biggl[ I(R_i=k) \widehat{\zeta}_{k}(V_{i})
    \Bigl\{ \widehat{\tau}_{a, k}(V_{i}) - {\tau}_{a, k}(V_{i}) \Bigr\} \biggr] \right) \nonumber \\
    & + E \left( \frac{n}{n_T} \biggl[ I(R_i = T) \Bigl\{ \widehat{\tau}_{a, k}(V_{i}) - \mu_{a, T}(V_{i})\Bigr\} \biggr] \right) \nonumber \\ 
    & = E \left( \frac{n}{n_k} \biggl[ \frac{I(A_i=a, R_i=k)}{\widehat{\pi}_{a, k}(X_{i})} \widehat{\zeta}_{k}(V_{i}) \Bigl\{ m_{a, k}(X_i) -\widehat{m}_{a, k}(X_{i}) \Bigr\} \biggr] \right) \nonumber \\
    & + E \left( \frac{n}{n_k} \biggl[ I(R_i=k) \widehat{\zeta}_{k}(V_{i})
    \Bigl\{ \widehat{m}_{a, k}(X_{i}) - m_{a, k}(X_{i}) \Bigr\} \biggr] \right) \nonumber \\
    & - E \left( \frac{n}{n_k} \biggl[ I(R_i=k) \widehat{\zeta}_{k}(V_{i})
    \Bigl\{ \widehat{\tau}_{a, k}(V_{i}) - {\tau}_{a, k}(V_{i}) \Bigr\} \biggr] \right) \nonumber \\
    & + E \left( \frac{n}{n_T} \biggl[ I(R_i = T) \Bigl\{ \widehat{\tau}_{a, k}(V_{i}) - \mu_{a, T}(V_{i}) \Bigr\} \biggr] \right) \nonumber \\ 
    & = \underbrace{E \left[ \frac{n}{n_k} \widehat{\zeta}_{k}(V_{i}) \Bigl\{ \frac{I(A_i=a, R_i=k)}{\widehat{\pi}_{a, k}(X_{i})}  - I(R_i=k) \Bigr\}  \Bigl\{ m_{a, k}(X_i) -\widehat{m}_{a, k}(X_{i}) \Bigr\} \right]}_{T_1} \nonumber \\
    & + \underbrace{E \left[ \Bigl\{ \frac{n}{n_T} I(R_i = T) - \frac{n}{n_k} I(R_i = k) \widehat{\zeta}_{k}(V_{i}) \Bigr\} \Bigl\{ \widehat{\tau}_{a, k}(V_{i}) - \tau_{a, T}(V_{i}) \Bigr\} \right]}_{T_2}. 
\end{align}
}

Step 3 is derived from Step 2 through conditioning on the variables $(V_i, R_i = k)$, and we obtain the following equality:
\begin{equation}
    E \left( \frac{n}{n_k} \biggl[ I(R_i=k) \widehat{\zeta}_{k}(V_{i})
    \Bigl\{ \tilde{\tau}_{a, k}(V_{i}) - \tau_{a, k}(V_{i}) \Bigr\} \biggr] \right) = E \left( \frac{n}{n_k} \biggl[ I(R_i=k) \widehat{\zeta}_{k}(V_{i})
    \Bigl\{ \widehat{m}_{a, k}(V_{i}) - m_{a, k}(V_{i}) \Bigr\} \biggr] \right).
\end{equation}
Step 4 can be derived from Step 3 by definition; in the target site, the conditional outcome is represented by $\mu_{a, T} = {m}_{a, T}(V_i)$ and $\tau_{a, T} = E\left\{ {m}_{a, T}(V_i) \mid V_i, R_i = T \right\} = {m}_{a, T}(V_i) = \mu_{a, T}$. Conditioning on $X_i$, we have the following expression:
\begin{align}
    T_1 = 
    & E \left[ \frac{n}{n_k} \widehat{\zeta}_{k}(V_{i}) \Bigl\{ \frac{I(A_i=a, R_i=k)}{\widehat{\pi}_{a, k}(X_{i})}  - I(R_i=k) \Bigr\}  \Bigl\{ m_{a, k}(X_i) -\widehat{m}_{a, k}(X_{i}) \Bigr\} \right] \nonumber \\ 
    = & E \left[ \frac{n}{n_k} \widehat{\zeta}_{k}(V_{i}) E \Bigl\{ \frac{I(A_i=a, R_i=k)}{\widehat{\pi}_{a, k}(X_{i})}  - I(R_i=k) \mid X_i \Bigr\}  \Bigl\{ m_{a, k}(X_i) -\widehat{m}_{a, k}(X_{i}) \Bigr\} \right] \nonumber \\ 
    = & E \left[ \frac{n}{n_k} \widehat{\zeta}_{k}(V_{i}) P(R_i=k \mid X_i) \Bigl\{ \frac{{\pi}_{a, k}(X_{i})}{\widehat{\pi}_{a, k}(X_{i})}  - 1 \Bigr\}  \Bigl\{ m_{a, k}(X_i) -\widehat{m}_{a, k}(X_{i}) \Bigr\} \right] \nonumber \\
    = & E \left[ \frac{\widehat{\zeta}_{k}(V_{i})}{\widehat{\pi}_{a, k}(X_{i})} \Bigl\{ {\pi}_{a, k}(X_{i})  - \widehat{\pi}_{a, k}(X_{i}) \Bigr\}  \Bigl\{ m_{a, k}(X_i) -\widehat{m}_{a, k}(X_{i}) \Bigr\} \right]
\end{align}
Given the boundedness of $\widehat{\zeta}_{k}$ and $\widehat{\pi}_{a, k}$, it can be concluded that $T_1 = O_p(\|\widehat{\pi}_{a, k}-\pi_{a, k} \| \|\widehat{m}_{a, k}-{m}_{a, k} \|)$. Now let us consider $T_2$. Conditioning on $V_i$, we have the following expression:
\begin{align}
    T_2  
    & = E \left[ \Bigl\{ \frac{n}{n_T} I(R_i = T) - \frac{n}{n_k} I(R_i = k) \widehat{\zeta}_{k}(V_{i}) \Bigr\} \Bigl\{ \widehat{\tau}_{a, k}(V_{i}) - \tau_{a, T}(V_{i}) \Bigr\} \right] \nonumber \\ 
    & = E \left[ E \Bigl\{ \frac{n}{n_T} I(R_i = T) - \frac{n}{n_k} I(R_i = k) \widehat{\zeta}_{k}(V_{i}) \mid V_i \Bigr\} \Bigl\{ \widehat{\tau}_{a, k}(V_{i}) - \tau_{a, T}(V_{i}) \Bigr\} \right] \nonumber \\
    & = E \left[ \Bigl\{ \frac{n}{n_T} P(R_i = T \mid V_i) - \frac{n}{n_k} P(R_i = k \mid V_i) \widehat{\zeta}_{k}(V_{i}) \Bigr\} \Bigl\{ \widehat{\tau}_{a, k}(V_{i}) - \tau_{a, T}(V_{i}) \Bigr\} \right] \nonumber \\
    & = E \left[ \Bigl\{ \frac{1}{P(R_i = T)} P(R_i = T \mid V_i) - \frac{1}{P(R_i = k)} P(R_i = k \mid V_i) \widehat{\zeta}_{k}(V_{i}) \Bigr\} \Bigl\{ \widehat{\tau}_{a, k}(V_{i}) - \tau_{a, T}(V_{i}) \Bigr\} \right] \nonumber \\
    & = E \left[  \frac{P(  R_i = k \mid V_i)}{P(R_i = k)}  \Bigl\{ {\zeta}_{k}(V_{i}) - \widehat{\zeta}_{k}(V_{i}) \Bigr\} \Bigl\{ \widehat{\tau}_{a, k}(V_{i}) - \tau_{a, T}(V_{i}) \Bigr\} \right]. 
\end{align}
Therefore, $T_2 = O_p(\|\widehat{\zeta}_k -\zeta_k \|\left\|\widehat{\tau}_{a, k}-\tau_{a, k} \right\|)$. 

Combining all results from the above three steps yields
\begin{align}
    \| \widehat{\mu}_{a, k}-\mu_{a, T} \| = O_p\left(n^{-1 / 2} + \|\widehat{\pi}_{a, k}-\pi_{a, k} \| \|\widehat{m}_{a, k}-{m}_{a, k} \|
    + \|\widehat{\zeta}_{k}-{\zeta}_{k} \| \|\widehat{\tau}_{a, k}-\tau_{a, k} \|
    \right). 
\end{align}

Further, if the nuisance estimators satisfy the following convergence rate
\begin{align}
    \left\|\widehat{m}_{a, k}-m_{a, k} \right\|\left\|\widehat{\pi}_{a, k}-\pi_{a, k} \right\| &= o_p(1 / \sqrt{n}), \quad
    \|\widehat{\zeta}_k -\zeta_k \|\left\|\widehat{\tau}_{a, k}-\tau_{a, k} \right\| =o_p(1 / \sqrt{n}), 
\end{align}
then the conditional bias, $E \left( \widehat{\mu}_{a, k}-\mu_{a, T}  \right) = o_p(1 / \sqrt{n})$ since $O_p(o_p(1 / \sqrt{n})) = o_p(1 / \sqrt{n})$. 

Consequently, by Slutsky's theorem, $\sqrt{n} ( \widehat{\mu}_{a,k} - \mu_{a, T} ) \stackrel{d}{\rightarrow} \mathcal{N}(0, \sigma^2)$. Here, $\sigma^2$ represents the semiparametric efficiency bound, which has been derived and defined in \eqref{equation_semiparametric_eff_bound}.
\end{proof}

\subsection{Proof of Theorem \ref{theorem_global_consistency_normality}}

\begin{proof}
    The proof presented herein closely follows the methodology outlined in \cite{han2021federated}; we start by establishing the consistency and asymptotic normality of the global estimator, assuming a fixed ${\eta}_k$. We then invoke Lemma 4 and 5 in \cite{han2021federated}, which state that the proposed adaptive estimation for $\eta_k$ as shown in \eqref{equation_l1_weights} allows for (i) the recovery of the optimal $\bar{\eta}_k$ by the estimator $\widehat{\eta}_k$, and (ii) the uncertainty introduced by $\widehat{\eta}_k$ is negligible when estimating $\Delta_T$.

    Lemma \ref{lemma_site_consistency} demonstrates the consistency of the site-specific estimators given that the source site $k$ satisfies the conditions outlined in Theorem \ref{theorem_site_consistency_normality}. We denote the set of source sites that fulfill the conditions in Theorem \ref{theorem_site_consistency_normality} as $\mathcal{S}^*$, and consider a fixed ${\eta}_k$ such that ${\eta}_k = 0$ for $k \not \in \mathcal{S}^*$, then 
    \begin{align}
        \widehat{\mu}_{a, G} (\eta_k) = \widehat{\mu}_{a, T} + \sum_{k \in \mathcal{K}} {\eta}_k \{\widehat{\mu}_{a, k} - \widehat{\mu}_{a, T} \} 
    \end{align}
    is consistent for $\mu_{a, T}$ since $\widehat{\mu}_{a, k}$ for $k \in \mathcal{S}^*$ are consistent estimators for ${\mu}_{a, T}$. Hence, we can establish that $\widehat{\Delta}_G (\eta_k) = \widehat{\mu}_{1, G} (\eta_k) - \widehat{\mu}_{0, G} (\eta_k)$ consistently estimates $\Delta_T = \mu_{1, T} - \mu_{0, T}$. Moving forward, we proceed to examine the asymptotic normality of the global estimator by utilizing the influence functions for the site-specific estimators. First, we rewrite the global estimator with the fixed $\eta_k$ as 
    \begin{align}
        \widehat{\Delta}_{G} (\eta_k)
        = \left(1 - \sum_{k \in \mathcal{S}} {\eta}_k \right) \left(  \widehat{\mu}_{1, T} - \widehat{\mu}_{0, T}  \right)+ \sum_{k \in \mathcal{S}} {\eta}_k \left(  \widehat{\mu}_{1, k} - \widehat{\mu}_{0, k}  \right) = \left(1 - \sum_{k \in \mathcal{S}} {\eta}_k \right) \widehat{\Delta}_T + \sum_{k \in \mathcal{S}} {\eta}_k \widehat{\Delta}_k
    \end{align}
    In Appendix \ref{appendix_site_estimator_influence}, the influence functions for the source site and target site estimators have been derived. To facilitate representation, we decompose the influence functions for the source site estimators into two parts, each defined on the target sample and source sample, respectively,
    \begin{equation}
    \begin{aligned}[b]
    \xi^{(1)}_{a, k}(Z_i) & =\frac{1}{P(R_i=k)} \Biggl[ \frac{I(A_i=a, R_i=k)}{ \pi_{a, k}(X_i)} \zeta_{k}(V_i) \{ Y_i -m_{a, k}(X_i) \} \Biggr] \\
    & + \frac{1}{P(R_i =k)} 
    \Biggl[ I(R_i=k) \zeta_{k}(V_i) \{ m_{a,k}(X_i)-\tau_{a, k}(V_i) \} \Biggr] \\
    \xi^{(2)}_{a, k}(Z_i) & = \frac{1}{P(R_i=T)} \Biggl[ I(R_i=T)  \tau_{a, k}(V_i) \Biggr] - \mu_{a, T}. 
    \end{aligned}
    \end{equation}  
    Here, it should be noted that the expectations of 
    where $\xi^{(1)}_{a, k}(Z_i)$ and $\xi^{(2)}_{a, k}(Z_i)$ are both equal to zero. Furthermore, for the target site estimator, we define:
    \begin{equation}
    \begin{aligned}[b]
    \xi^{(2)}_{a, T}(Z_i) & = \frac{1}{P(R_i=T)} \Biggl[ \frac{I(A_i=a, R_i=T)}{ \pi_{a, T}(V_i)} \{ Y_i-m_{a, T}(V_i) \} + I(R_i=T)  m_{a, T}(V_i)\Biggr] -\mu_{a, T}
    \end{aligned}
    \end{equation}
    and the expectation of $\xi^{(2)}_{a, T}(Z_i)$ is also zero. We denote the influence function for $\widehat{\Delta}_{k}$ as $\xi^{(1)}_{1, k}(Z_i) - \xi^{(1)}_{0, k}(Z_i) + \xi^{(2)}_{1, k}(Z_i) - \xi^{(2)}_{0, k}(Z_i)$. 
    Similarly, the influence function for $\widehat{\Delta}_{T}$ is given by $\xi^{(2)}_{1, T}(Z_i) - \xi^{(2)}_{0, T}(Z_i)$. Thus, the influence function for the global estimator evaluated with target and source samples can be expressed as:
    {\allowdisplaybreaks
    \begin{align}
        \widehat{\Delta}_{G} (\eta_k) - \Delta_{T}
        & = \left(1 - \sum_{k \in \mathcal{S}} {\eta}_k \right) \left(\widehat{\Delta}_{T} - \Delta_{T} \right) + \sum_{k \in \mathcal{S}} {\eta}_k \left(\widehat{\Delta}_{k}- \Delta_{T} \right) \nonumber \\
        & = \left(1 - \sum_{k \in \mathcal{S}} {\eta}_k \right) 
        \frac{1}{n_T} \sum_{i = 1}^N I(R_i = T) \left\{\xi^{(2)}_{1, T}(Z_i) - \xi^{(2)}_{0, T}(Z_i)\right\} \nonumber \\
        & + \left( \sum_{k \in \mathcal{S}} {\eta}_k \right)\frac{1}{n_T}\sum_{i = 1}^N I(R_i = T) \left\{\xi^{(2)}_{1, k}(Z_i) - \xi^{(2)}_{0, k}(Z_i)\right\} \nonumber \\
        & + \sum_{k \in \mathcal{S}} {\eta}_k \frac{1}{n_k}\sum_{i = 1}^N I(R_i = k) \left\{\xi^{(1)}_{1, k}(Z_i) - \xi^{(1)}_{0, k}(Z_i)\right\} \nonumber \\
        & = 
        \frac{1}{N} \sum_{i = 1}^N I(R_i = T) \left(1 - \sum_{k \in \mathcal{S}} {\eta}_k \right)  \frac{\left\{\xi^{(2)}_{1, T}(Z_i) - \xi^{(2)}_{0, T}(Z_i)\right\}}{P(R_i = T)} \nonumber \\
        & + \frac{1}{N} \sum_{i = 1}^N I(R_i = T) \left(\sum_{k \in \mathcal{S}} {\eta}_k \right) \frac{\left\{\xi^{(2)}_{1, k}(Z_i) - \xi^{(2)}_{0, k}(Z_i)\right\}}{P(R_i = T)} \nonumber \\
        & + \frac{1}{N} \sum_{k \in \mathcal{S}} \sum_{i = 1}^N I(R_i = k) {\eta}_k \frac{\left\{\xi^{(1)}_{1, k}(Z_i) - \xi^{(1)}_{0, k}(Z_i)\right\}}{P(R_i = k)}  
        \label{equation_influence_function_global}
    \end{align}}
    
    The asymptotic variance for $\widehat{\Delta}_{G} (\eta_k)$ equals the variance of the influence function \eqref{equation_influence_function_global}. Its derivation is similar to that of \ref{equation_semiparametric_eff_bound}. Let us denote this asymptotic variance as ${\mathcal{V}} (\eta_k)$. Under the assumption of i.i.d. units within each site, we have: 
    \begin{align}
        {\mathcal{V}} (\eta_k)
        & = \left(1 - \sum_{k \in \mathcal{S}} {\eta}_k \right)^2  \frac{\text{Var} \left\{\xi^{(2)}_{1, T}(Z_i) - \xi^{(2)}_{0, T}(Z_i) \mid R_i = T\right\}}{P(R_i = T)} \nonumber \\
        & + \left(\sum_{k \in \mathcal{S}} {\eta}_k\right)^2  \frac{\text{Var} \left\{\xi^{(2)}_{1, k}(Z_i) - \xi^{(2)}_{0, k}(Z_i) \mid R_i = T \right\}}{P(R_i = T)} \nonumber \\
        & + 2 \left(1 - \sum_{k \in \mathcal{S}} {\eta}_k \right) \left(\sum_{k \in \mathcal{S}} {\eta}_k \right)  \frac{\text{Cov} \left\{\xi^{(2)}_{1, T}(Z_i) - \xi^{(2)}_{0, T}(Z_i), \xi^{(2)}_{1, k}(Z_i) - \xi^{(2)}_{0, k}(Z_i) \mid R_i = T \right\}}{P(R_i = T)} \nonumber \\
        & + \sum_{k \in \mathcal{S}} {\eta}_k^2 \frac{\text{Var}\left\{\xi^{(1)}_{1, k}(Z_i) - \xi^{(1)}_{0, k}(Z_i) \mid R_i = k \right\}}{P(R_i = k)}
    \end{align}
    Under the uniform boundedness conditions stated in Theorem \ref{theorem_site_consistency_normality}, this variance is finite. Consequently, we can express the asymptotic distribution of $\sqrt{N}\left(\widehat{\Delta}{G} (\eta_k) -\Delta{T}\right)$ as:
    \begin{equation}
        \sqrt{N}\left(\widehat{\Delta}_{G} (\eta_k) -\Delta_{T}\right) \stackrel{d}{\rightarrow} \mathcal{N}\left(0, {\mathcal{V}} (\eta_k) \right)
    \end{equation}
    We further define the optimal adaptive weight $\bar{\eta}_k$ as follows:
    \begin{equation}
        \bar{\eta}_k = \arg \min_{{\eta}_k = 0 \forall k \not \in \mathcal{S}^*} \mathcal{V} (\eta_k)
    \end{equation}
    
    By leveraging Lemmas 4 and 5 from \cite{han2021federated}, we can recover the optimal $\bar{\eta}_k$ with negligible uncertainty for estimating $\Delta_T$ if we estimate $\widehat{\eta}_{k, L_1}$ using \eqref{equation_l1_weights}. The consistency of $\widehat{\mathcal{V}}(\widehat{\eta}_{k, L_1})$ follows when we can effectively approximate $\mathcal{V}(\bar{\eta}_k)$ with $\widehat{\mathcal{V}}(\widehat{\eta}_{k, L_1})$. Thus,
    \begin{equation}
        \sqrt{N / \widehat{\mathcal{V}} (\widehat{\eta}_{k, L_1}) }\left(\widehat{\Delta}_{G}(\widehat{\eta}_{k, L_1}) -\Delta_{T}\right) \stackrel{d}{\rightarrow} \mathcal{N}\left(0, 1\right)
    \end{equation}

    We now proceed to analyze the efficiency gain resulting from the federation process. The estimator that relies solely on the target data is denoted as $\widehat{\Delta}_T = \widehat{\Delta}_G(\eta_0)$, where $\eta_0$ assigns all weights to the target and none to the source.
    In contrast, the estimator that leverages the proposed adaptive ensemble approach is denoted as $\widehat{\Delta}_G(\widehat{\eta}_{k, L_1}) = (1 - \sum_{k \in \mathcal{S}} \widehat{\eta}_{k, L_1} ) \widehat{\Delta}_T + \sum_{k \in \mathcal{S}} \widehat{\eta}_{k, L_1} \widehat{\Delta}_k$. Here, $\widehat{\eta}_{k, L_1}$ can recover the optimal weight  $\bar{\eta}_{k}$ that is associated with the minimum asymptotic variance. Consequently, the variance of $\widehat{\Delta}_G(\widehat{\eta}_{k, L_1})$ is no larger than that of the estimator relying solely on the target data by definition.

    To establish that the asymptotic variance of $\widehat{\Delta}_G(\widehat{\eta}_{k, L_1})$ is strictly smaller than that of the estimator based solely on the target data $\widehat{\Delta}_T$, we adapt Proposition 1 in \cite{han2021federated} with a modified informative source condition. Specifically, for each source site $s \in \mathcal{S}^*$, we define $\widehat{\Delta}_{G}(\eta_{s})$ a global estimator where $\eta_{s}$ is the optimal ensemble weight if we only consider target site and this source site $s$. Then the modified informative source condition is $\operatorname{Cov}\left\{ \sqrt{N} \widehat{\Delta}_{T}, \sqrt{N}\left(\widehat{\Delta}_{G}({\eta}_{s}) -\widehat{\Delta}_T\right) \right\}$, where $\widehat{\Delta}_{G}({\eta}_{s}) -\widehat{\Delta}_T$ can be expressed as
    \begin{align}
        \widehat{\Delta}_{G}({\eta}_{s}) -\widehat{\Delta}_T
        & = \widehat{\Delta}_{G}({\eta}_{s}) - \Delta_T - \left( \widehat{\Delta}_T - \Delta_T \right) \nonumber \\ 
        & = 
        \frac{1}{N} \sum_{i = 1}^N I(R_i = T) \left(1 - {\eta}_{s} \right)  \frac{\left\{\xi^{(2)}_{1, T}(Z_i) - \xi^{(2)}_{0, T}(Z_i)\right\}}{P(R_i = T)} \nonumber \\
        & + \frac{1}{N} \sum_{i = 1}^N I(R_i = T) {\eta}_{s} \frac{\left\{\xi^{(2)}_{1, s}(Z_i) - \xi^{(2)}_{0, s}(Z_i)\right\}}{P(R_i = T)} \nonumber \\
        & + \frac{1}{N} \sum_{i = 1}^N I(R_i = s) {\eta}_{s} \frac{\left\{\xi^{(1)}_{1, s}(Z_i) - \xi^{(1)}_{0, s}(Z_i)\right\}}{P(R_i = s)}\nonumber \\
        & - \frac{1}{N} \sum_{i = 1}^N I(R_i = T) \frac{\left\{\xi^{(2)}_{1, T}(Z_i) - \xi^{(2)}_{0, T}(Z_i)\right\}}{P(R_i = T)} \nonumber \\
        & = \frac{1}{N} \sum_{i = 1}^N I(R_i = T) {\eta}_{s} \frac{\left\{\xi^{(2)}_{1, s}(Z_i) - \xi^{(2)}_{0, s}(Z_i) - \xi^{(2)}_{1, T}(Z_i) + \xi^{(2)}_{0, T}(Z_i)
        \right\}}{P(R_i = T)} \nonumber \\ 
        & + \frac{1}{N} \sum_{i = 1}^N I(R_i = s) {\eta}_{s} \frac{\left\{\xi^{(1)}_{1, s}(Z_i) - \xi^{(1)}_{0, s}(Z_i)\right\}}{P(R_i = s)}. 
    \end{align}
    In summary, if there exist a source site $s$ with consistent estimator of $\Delta_{T}$ and further satisfy $\left|\operatorname{Cov}\left\{ \sqrt{N} \widehat{\Delta}_{T}, \sqrt{N}\left(\widehat{\Delta}_{G} ( {\eta}_{s}) -\widehat{\Delta}_T\right) \right\} \right| \geq \varepsilon$ where $\varepsilon$ denotes a positive constant, the variance of $\widehat{\Delta}_{G}$ is strictly smaller than $\widehat{\Delta}_{T}$. 
\end{proof}







\end{document}